\renewcommand\cite{\citep}
\newcommand{\bra}[1]{\left( #1 \right)}
\newcommand{\brb}[1]{\left[ #1 \right]}
\newcommand{\brc}[1]{\left\{ #1 \right\}}
\newcommand{\nm}[1]{\left\lVert#1 \right\rVert}
\newcommand{\infnm}[1]{\nm{#1}_\infty}
\newcommand{\snm}[1]{\nm{#1}_2}
\newcommand{\abs}[1]{\left\lvert #1 \right\rvert}
\newcommand{\range}[1]{range\bra{#1}}
\newcommand{\Fc}{\mathcal{F}}
\newcommand{\Gc}{\mathcal{G}}
\newcommand{\Eb}{\mathbb{E}}
\newcommand{\Rb}{\mathbb{R}}
\newcommand{\Nb}{\mathbb{N}}
\newcommand{\Pb}{\mathbb{P}}
\newcommand{\opterr}{\tau_{\mathrm{opt}}}
\newcommand{\ferr}[1]{\tau_{#1}}
\newcommand{\ferrlong}[1]{\tau_{\Fc,\Gc}\bra{#1}}
\newcommand{\tverr}[1]{\epsilon_{\mathrm{TV}}\bra{#1}}
\newcommand{\prob}[1]{\rho_{#1}}
\newcommand{\rhotrain}{\rho_{\text{train}}^\alpha}
\newcommand{\tp}{\mathrm{TP}}
\newcommand{\fp}{\mathrm{FP}}
\newcommand{\ruc}{\mathrm{AUC}}
\newcommand{\dg}{D_{\mathrm{generated}}}
\newcommand{\dtrain}[1]{D_{\mathrm{train{#1}}}}
\newcommand{\kl}[2]{d_{\mathrm{KL}}\bra{#1,#2}}
\newcommand{\tv}[2]{d_{\mathrm{TV}}\bra{#1,#2}}
\newcommand{\df}[2]{d_{\Fc}\bra{#1, #2}}
\newcommand{\fvnorm}[2][\Fc]{\nm{#2}_{#1,1}}
\newcommand{\comcoe}[2]{\Gamma_{#1,#2}}
\newcommand{\comcoet}[3]{\Xi_{#1,#2,#3}}
\newcommand{\Span}[1]{span #1}
\NewDocumentCommand{\rademacher}{ O{\mu} O{m} m }{R_#2^{\bra{#1}}\bra{#3}}
\crefname{theorem}{Theorem}{Theorems}
\Crefname{theorem}{Theorem}{Theorems}
\crefname{proposition}{Proposition}{Propositions}
\Crefname{proposition}{Proposition}{Propositions}
\newtheorem{theorem}{Theorem}
\newtheorem{corollary}{Corollary}
\newtheorem{proposition}{Proposition}
\newtheorem{lemma}{Lemma}
\newcounter{packednmbr}
\title{On the Privacy Properties of GAN-generated Samples}
\author{Zinan Lin\thanks{zinanl@andrew.cmu.edu}}
\author{Vyas Sekar\thanks{vsekar@andrew.cmu.edu}}
\author{Giulia Fanti\thanks{gfanti@andrew.cmu.edu \\ 
This work appeared in AISTATS 2021. This version has slightly modified the statement of Theorem \ref{thm:gan-pdp-nbr}.}}
\affil[]{Department of Electrical and Computer Engineering \\ Carnegie Mellon University}
\date{}
\begin{document}

\maketitle

\begin{abstract}
	The privacy implications of generative adversarial networks (GANs) are a topic of  great interest, leading to several recent algorithms for training GANs with privacy guarantees. 
	By drawing connections to the generalization properties of GANs, we prove that under  some assumptions,  GAN-generated samples inherently satisfy some (weak) privacy guarantees.
	First, we show that if a GAN is trained on $m$ samples and used to generate $n$ samples, the generated samples are  $(\epsilon,\delta)$-differentially-private for $(\epsilon,\delta)$ pairs where $\delta$ scales as $O(n/m)$.
	We show that under some special conditions, this upper bound is tight.  
	Next, we study the robustness of GAN-generated samples to membership inference attacks. 
	We model  membership inference  as a hypothesis test in which the adversary must determine whether a given sample was drawn from the training dataset or from the underlying data distribution. 
	We show that this adversary can achieve an area under the ROC curve that scales no better than $O(m^{-1/4})$.
\end{abstract}

\section{Introduction}\label{sec:intro}
Generative adversarial networks (GANs) are a class of generative models that aim to generate samples from a distribution $\mu$ given a database of training samples $D=\{x_i\}_{i=1}^m$, where the $x_i  \in X$ are drawn i.i.d. from a distribution $\mu$  \cite{goodfellow2014generative}. 
GANs are posed as a zero-sum game between two neural networks: a \emph{generator} that aims to generate new samples $\hat x_i \sim \mu$  and a \emph{discriminator} that aims to classify samples as being either real (i.e., a member of $D$) or generated.  
In recent years, GANs have garnered interest as a tool for generating synthetic data from potentially-sensitive raw datasets, such as medical data \cite{esteban2017real,jordon2018pate,choi2017generating}, banking transactions \cite{jordon2018pate}, and networking and server traces \cite{lin2020using}.
However, GANs are known to leak information about the data on which they were trained. Common concerns include memorization of sensitive training samples \cite{webster2019detecting} and membership inference attacks \cite{hayes2019logan,chen2019gan}.

In response to these concerns, a popular approach in the literature has been to train differentially-private GAN models. 
This can be accomplished through training methods like differentially-private stochastic gradient descent (DP-SGD) \cite{abadi2016deep}, PATE-GAN \cite{jordon2018pate}, and others \cite{long2019scalable,chen2020gs}. 
These approaches guarantee differential  privacy for releasing the \emph{model parameters}.  
In practice, they also degrade model quality due to the added noise, sometimes to the extent that the utility of data is completely destroyed \cite{lin2020using}. 

In some practical scenarios, an attacker may only have access to a GAN's \emph{generated samples}, instead of the \emph{model parameters} (e.g., when releasing synthetic datasets instead trained models \cite{choi2017generating,lin2020using}). 
Note that ensuring the privacy for releasing generated samples is a weaker condition than ensuring the privacy for releasing parameters
because of the post-processing property of differential privacy \cite{dwork2014algorithmic}. 
Releasing generated samples (versus model parameters) is therefore a promising solution for achieving better privacy-fidelity trade-offs.
As a first step towards this goal, it is important to understand the privacy properties of GAN-generated samples.

In this work, we show that GAN-generated samples satisfy an \emph{inherent} privacy guarantee without any special training mechanism, by making connections to recent results on the generalization properties of GANs \cite{zhang2017discrimination}. 
The strength of this privacy guarantee varies for different notions of privacy.
Specifically, we study two privacy notions: \emph{differential privacy (DP)}  and \emph{robustness to membership inference attacks}.

\begin{itemize}
	\item For differential privacy, we show that %
	the samples output by a vanilla GAN (i.e., trained without differential privacy) inherently satisfy a %
	(weak) differential privacy guarantee with high probability over the randomness of the training procedure. 
	More specifically, we consider the mechanism of training a GAN on $m$ training samples and using it to generate $n$ samples. We show that this mechanism ensures $\bra{\epsilon, \frac{O(n/m)}{\epsilon\bra{1-e^{-\epsilon}}}}$-differentially privacy %
	(under some assumptions). 
	The rate $O(n/m)$ shows that differential privacy guarantee is stronger as the training set size grows, but degrades as you generate more samples from the GAN.
	Additionally, we show that $O(1/m)$ rate is the optimal rate we can get when $n=1$.
	On the positive side, the results suggest that GAN-generated samples are inherently differentially private. On the negative side, however, the rate is as weak as releasing $n$ raw samples from $m$ training samples. This suggests that we need to incorporate other techniques (e.g., DP-SGD, PATE) in practice if we want meaningful differential privacy guarantees.
	
	\item For robustness to membership inference attacks, we show that vanilla GANs are \emph{inherently} robust to black-box membership inference attacks.
	We study the worst-case setting where an attacker can draw infinite samples from the trained GAN. Even in this case, we show that for any attacker strategy, the difference between the true positive rate and the false positive rate---as well as the area under the ROC curve (AUC)---both have upper bounds that scale as $O(m^{-\frac{1}{4}})$, where $m$ is the number of training samples. 
	This again means that GANs are more robust to membership inference attacks as training set size grows.
	More generally, for arbitrary generative models, we give a tight bound on the ROC region that relies on a simple, geometric proof.
	To the best of our knowledge, this is the first result to bound either the ROC region or the rate of decay of attack success probabilities for membership inference attacks, including for discriminative models.
\end{itemize}

The paper is organized as follows. We discuss the preliminaries and related work in \cref{sec:background}. Then we discuss the results on differential privacy and membership inference attacks in \cref{sec:dp} and \cref{sec:membership} respectively. Finally we conclude the paper in \cref{sec:conclusion}.

\section{Background and Related Work}
\label{sec:background}

\paragraph{GANs} GANs are a type of generative model that implicitly learns a distribution from samples and synthesizes new, random samples \cite{goodfellow2014generative}. To achieve this goal, GANs use two components: a generator $g$ and a discriminator $f$. The generator maps a random vector $z$ from a pre-defined distribution $p_z$ (e.g., Gaussian or uniform) on a latent space to the sample space $X$; the discriminator reads a sample $x$, either from the real distribution $\mu$ or the generated distribution, and discriminates which distribution it comes from.
From another point of view, the discriminator is trying to estimate a distance between the generated distribution and the real distribution, and the generator tries to minimize the distance. 
More specifically, $g$ and $f$ are trained with the following loss:
\begin{align*}
	\min_g\max_f d_f(\mu, g_z),
\end{align*}
where $d_f$ is a distance measurement between distributions parametrized by $f$, and $g_z$ denotes the distribution obtained by sampling $z\sim p_z$ and mapping it through the generator $g(z)$. In the original  GAN paper \cite{goodfellow2014generative}, the authors use Jensen–Shannon divergence with formula
\begin{align*}
	d_f(\mu,g_z) =\Eb_{x\sim \mu} \brb{\log f(x)} + \Eb_{x\sim g_z} \brb{\log\bra{1-f(x)}}.
\end{align*}
Many other distances have later been used like the Kullback-Leibler divergence and Wasserstein distance \cite{nowozin2016f,gulrajani2017improved}.
After the invention of GANs, many approaches were proposed to improve their diversity and fidelity \cite{gulrajani2017improved,miyato2018spectral,lin2018pacgan,arjovsky2017towards}; today, GANs generate state-of-the-art realistic images  \cite{karras2017progressive,brock2018large}. 
Inspired by these early successes, there has been much interest in using GANs to share data in privacy-sensitive applications such as medical data \cite{esteban2017real,jordon2018pate,choi2017generating}, backing transactions \cite{jordon2018pate}, and networking and server traces \cite{lin2020using}.

\paragraph{Differential privacy}
Differential privacy (DP) has become the \emph{de facto} formal privacy definition in  many applications \cite{dwork2008differential,dwork2014algorithmic}. 
We say that two databases $D_0$ and $D_1$ are \emph{neighboring} if they differ in at most one element. 
A mechanism $M$ is $(\epsilon,\delta)$-differentially-private \cite{dwork2008differential,dwork2014algorithmic} if for any neighboring database $D_0$ and $D_1$, and any set $S\subseteq \range{M}$,
\begin{align*}
\Pb\brb{M(D_0)\in S} \leq e^\epsilon\Pb\brb{M(D_1)\in S} + \delta.
\end{align*}
Our analysis involves a stronger notion of differential privacy called \emph{probabilistic differential privacy}. 
A mechanism $M$ is $(\epsilon,\delta)$-probabilistically-differentially-private \cite{meiser2018approximate} if for any neighboring database $D_0$ and $D_1$, there exists sets $S_0\subseteq \range{M}$ where $\Pb\brb{M(D_0)\subseteq S_0} \leq \delta$, such that for any set $S\subseteq \range{M}$ 
\begin{align*}
\Pb\brb{M(D_0)\in S\setminus S_0} \leq e^\epsilon\Pb\brb{M(D_1)\in S\setminus S_0}.
\end{align*}
This says that $(\epsilon,0)$-differential-privacy condition holds except over a region of the support with probability mass at most $\delta$. 

It is straightforward to show that $(\epsilon,\delta)$-probabilistically-differential-privacy implies $(\epsilon,\delta)$-differential-privacy. 
In fact,     probabilistic differential privacy is \emph{strictly} stronger than differential privacy.
To see this, consider the following example: assume $\prob{p},\prob{q}$ are the distribution function of $M(D_0)$ and $M(D_1)$ respectively. When $e^\epsilon (1-\delta) < 1$ and $\delta > 0$, let $\epsilon' = \min\brc{1-e^\epsilon (1-\delta), \delta}$, we can construct the following $\prob{p}, \prob{q}$:
$\prob{q}(0)=0, \prob{p}(0)=1 - (1-\epsilon')e^{-\epsilon}$ and 
$\prob{q}(1)=1, \prob{p}(1)=(1-\epsilon')e^{-\epsilon}$.
Then $M$ satisfies $(\epsilon, \delta)$-differential-privacy, but does not satisfy $(\epsilon, \gamma)$-probabilistically-differential-privacy for any $\gamma < 1$.

\paragraph{Differential privacy and generalization}
The relation between differential privacy and generalization is well-studied  \cite{cummings2016adaptive, dwork2015preserving,bassily2016algorithmic,nissim2015generalization,wang2016learning}.
Three main factors differentiate our work:
  
(1)  Prior work has primarily studied how differential privacy implies various notions of generalization \cite{cummings2016adaptive, dwork2015preserving,bassily2016algorithmic,nissim2015generalization,wang2016learning}.
We  are interested in the other direction: when does generalization imply differential privacy?  
Cummings et al. showed that differential privacy is strictly weaker than perfect generalization but is strictly stronger than robust generalization \cite{cummings2016adaptive}. 
However, these results do not directly apply to generative models.  First, the results about perfect generalization only hold when the domain size is finite, which is less interesting for generative models. 
Second, the generalization definitions are based on hypotheses (i.e., conditional distributions), and are not easily extendable to generative models.

(2) More generally, prior work  has mainly considered discriminative models instead of generative models \cite{cummings2016adaptive,dwork2015preserving,bassily2016algorithmic,nissim2015generalization,wang2016learning}. \citet{wu2019generalization} showed that differential privacy implies generalization in GANs. 
However, the notion of generalization studied by \citet{wu2019generalization} is not meaningful for GANs. It captures the distance between the expected empirical loss of the discriminator and its actual loss, so we can have zero generalization error on the discriminator's loss (e.g., for a discriminator that always outputs 0 for any input) while having an arbitrary generated distribution. 
A more meaningful notion of generalization is to quantify the distance between the generated and real distributions. 
In this work, we use this latter notion of generalization.%

(3) The notion of differential privacy studied in prior work is with respect to releasing the \emph{parameters} of a model \cite{cummings2016adaptive, dwork2015preserving,bassily2016algorithmic,nissim2015generalization,wang2016learning,wu2019generalization}, whereas ours is with respect to releasing \emph{generated samples}.
This is a weaker guarantee than if one were to reason about the differential privacy guarantees of a generative model's parameters, but is relevant to many application settings.

\paragraph{Membership inference attacks} 
Membership inference attacks are closely related to differential privacy. 
Given a trained model, a membership inference attack aims to infer whether a given sample $x$ was in the training dataset or not.
The main difference between membership inference and differential privacy is that the attacker in differential privacy is assumed to know an adversarially-chosen pair of candidate training databases,
whereas in membership inference attacks, the adversary is typically given access only to test samples (of which some are training samples) and the model. 
In some cases, the attacker is also given side information about the number of training samples in the test set.
Hence, in general, the attacker in membership inference is neither strictly weaker nor strictly stronger than the differential privacy attacker. 

There have been many membership inference attacks proposed for discriminative models \cite{sablayrolles2019white,li2020label,long2018understanding,melis2019exploiting,salem2018ml,shokri2017membership,yeom2018privacy} and generative models (including GANs) \cite{hayes2019logan,chen2019gan,hilprecht2019monte}. Therefore, understanding robustness of GANs to membership inference attacks is important. 

There has been some theoretical analysis on membership inference attacks \cite{sablayrolles2019white,farokhi2020modelling}. 
For example, \citet{sablayrolles2019white} show a theoretically optimal strategy for membership inference attacks. 
\citet{farokhi2020modelling} show that the accuracy of membership inference attacks for a particular sample is upper bounded by the Kullback–Leibler divergence between the distributions of parameters with and without that sample. 
However, these results do not give a practical method for bounding an attacker's global performance (i.e., the ROC curve).
In this paper, we resolve the issue. Ours is the first work to show that the success rate of membership inference attack decays as the number of training samples $m$ grows, whereas prior work \cite{farokhi2020modelling} only shows that the limit of the success rate is 0.5 as  $m\to \infty$. 

\section{Bounds on Differential Privacy}
\label{sec:dp}

 We start with some notation, definitions, and assumptions. 
For a probability measure $\mu$ on $X$, we let $\prob{\mu}$ denote its density function.
We use $\Gc$ and $\Fc$ to denote the set of possible generators and discriminators, respectively, where $\Fc$ is a set of functions $X\to \Rb$.
Our results rely on three assumptions:
\begin{enumerate}
	\item[(A1)] Our generator set $\Gc$ and  discriminator set $\Fc$  satisfy $\forall \nu_1,\nu_2\in \Gc, \log\bra{\nicefrac{\prob{\nu_1}}{\prob{\nu_2}}} \in \Span{\Fc}$, where $\Span{\Fc}$ is defined as the set of linear combinations of the functions:
	\begin{align*}
	\Span{\Fc} \triangleq \brc{w_0+\sum_{i=1}^{n}w_if_i: w_i\in \Rb, f_i\in \Fc, n\in \Nb}.
	\end{align*}
	\item[(A2)] Our discriminator set $\Fc$ is even, i.e, $\forall f\in\Fc$, $-f\in\Fc$, and assume that $\forall f \in \Fc,~\forall x\in X$, 
	$$\infnm{f(x)}\leq \Delta,$$ 
	where $\infnm{f} \triangleq \sup_{x\in X} |f(x)|$.
	\item[(A3)] The discriminator set $\Fc=\brc{f_\theta: \theta\in \Theta\subseteq [-1,1]^p}$ and 
	$$\infnm{f_\theta-f_{\theta'}}\leq L\snm{\theta-\theta'}.$$

\end{enumerate}
According to Lemma 4.1 in \citet{bai2018approximability}, when generators in $\Gc$ are invertible neural networks (e.g., in \citet{dinh2016density,behrmann2019invertible}) with $l$ layers, then the discriminator set $\Fc$ of neural networks with $l+2$ layers satisfy (A1). 
Assumption (A2) is easily satisfied by neural networks with an activation function on the output layer that bounds the output (e.g., sigmoid).
(A3) assumes the discriminator is Lipschitz in its (bounded) parameters; 
several recent works attempt to make network layers Lipschitz in inputs through various forms of regularization (e.g., spectral normalization \cite{miyato2018spectral,lin2020spectral}), which makes the network Lipschitz also in parameters \cite{lin2020spectral}.

 Given $\mu,\nu$, two probability measures on $X$, and set $\Fc$ of functions $X\to \Rb$, the  \emph{integral probability metric} \cite{muller1997integral} is defined as:
\begin{align*}
\df{\mu}{\nu} \triangleq \sup_{f\in \Fc}\brc{\Eb_{x\sim \mu}\brb{f(x)} - \Eb_{x\sim \nu}\brb{f(x)}}.
\end{align*}
Given function set $\Fc$, the \emph{$\Fc$-variation norm} of function $g$ is defined as
\begin{align*}
\fvnorm{g} \triangleq \inf\bigg\{\sum_{i=1}^{n}\abs{w_i}: g=w_0 + \sum_{i=1}^{n}w_if_i,
\forall n\in \Nb, w_i\in\Rb, f_i\in\Fc\bigg\},
\end{align*}
which intuitively describes the complexity of linearly representing $g$ using functions in $\Fc$ \cite{zhang2017discrimination}.
We define
\begin{align}
\comcoe{\Fc}{\Gc} \triangleq \sup_{\nu_1,\nu_2 \in \Gc}\fvnorm{\log\bra{\nicefrac{\rho_{\nu_1}}{\rho_{\nu_2}}}} ,\label{eq:Gamma}
\end{align}
which intuitively bounds the complexity of representing differences in log densities of pairs of generators in $\Gc$ using functions in $\Fc$.

We consider the GAN training and sampling mechanism in \cref{alg:dp-gan}, which adds a sampling processing before the normal GAN training.
Note that the sampling process in \cref{alg:dp-gan} is commonly used in existing GAN implementations \cite{goodfellow2014generative}, which typically sample i.i.d. from $D$ in each training batch. 
We have moved this sampling process to the beginning of training in \cref{alg:dp-gan} for ease of analysis.

\IncMargin{1.5em}
\begin{algorithm}[ht]
	\SetKwInOut{Input}{Input}
	\SetKwInOut{Output}{Output}
	
	\Indm
	\Input{$D$: A training dataset containing $m$ samples. \newline 
		       $k$: Number of sampled training samples used in training. \newline
	           $n$: Number of generated samples.}
	\Output{$\dg$: $n$ generated samples.}
	\BlankLine
	
	\Indp
	$\dtrain{} \leftarrow$ $k$ i.i.d. samples from $D$ \label{alg:dp-gan-sample}\;
	$g \leftarrow$ Trained GAN using $\dtrain{}$\;
	$\dg{} \leftarrow$ $n$ generated samples from the trained $g$\;
	\caption{Differentially-private GAN mechanism.}
	\label{alg:dp-gan}
\end{algorithm}
\DecMargin{1.5em}

Finally, we define the quantity
\begin{align}
	\ferrlong{k,\xi,\mu} \triangleq 2\bra{\inf_{\nu\in\Gc}\df{\mu}{\nu} + \opterr + \frac{C_\xi}{\sqrt{k}}},
	\label{eq:epsilon-f}
	\end{align}
	where 
	\begin{align}
	C_\xi=16\sqrt{2\pi}pL + 2\Delta \sqrt{2\log(1/\xi)},
	\label{eq:c_xi}
	\end{align} 
	$p$ and $L$ are the constants defined in (A3),  $\Delta$ is the constant defined in (A2),  $\opterr$ is an upper bound on the optimization error, i.e.,
	$$\df{\hat{\mu}_k}{g}-\inf_{\nu\in\Gc}\df{\hat{\mu}_k}{\nu} \leq \opterr, $$
$\mu$ is the real distribution,  $\hat{\mu}_k$ be the empirical distribution of $\mu$ on $k$ i.i.d training samples,  $g$ is the trained generator from the optimization algorithm.
The interpretation of $\ferrlong{k,\xi,\mu}$ is described in greater detail in \cref{sec:prf-gan-pdp-nbr}, 
but intuitively, it will be used to bound %
a GAN's generalization error arising from  approximation, optimization, and sampling of the training datasets in \cref{alg:dp-gan-sample}. 
To reduce notation, we will henceforth write this quantity as $\ferr{k,\xi}$.

Our main result states that a GAN trained on $m$ samples and used to generate $n$ samples satisfies a $\bra{\epsilon, \frac{O(n/m)}{\epsilon\bra{1-e^{-\epsilon}}}}$-differential-privacy guarantee; 
moreover, this bound is tight when $n=1$ and for small $\epsilon$. 
\begin{theorem}[Achievability]\label{thm:gan-pdp-nbr}
 Consider a GAN trained on $m$ i.i.d. samples from distribution $\mu$. 
The mechanism in \cref{alg:dp-gan} under assumptions (A1)-(A3) satisfies $\bra{\epsilon,\delta}$-differential privacy for any $\epsilon>0$, $\xi >0$, and 
\begin{align}
\delta~  ~>  \frac{n  ~  \comcoe{\Fc}{\Gc}   }{\epsilon(1-e^{-\epsilon})}~\bra{\frac{2\Delta}{m} + \ferr{k,\xi}} +2\xi ~.
\label{eq:s}
\end{align}
\end{theorem}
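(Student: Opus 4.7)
The plan is to work within the probabilistic DP (pDP) framework, which strictly strengthens $(\epsilon,\delta)$-DP, and to chain three bounds: (i) an $\Fc$-IPM bound $\df{g^{(D_0)}}{g^{(D_1)}}$ between the generators trained on two neighbors $D_0, D_1$; (ii) a conversion of this IPM into a per-sample KL bound using assumption (A1) and $\comcoe{\Fc}{\Gc}$; and (iii) a conversion of the resulting $n$-sample KL into pDP with the characteristic $1/\bra{\epsilon\bra{1-e^{-\epsilon}}}$ scaling. The $+2\xi$ term in \eqref{eq:s} will come from union-bounding two generalization-failure events, one per neighbor.

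Fix neighbors $D_0, D_1$ differing in one point; write $\hat\mu_{D_i}$ for the empirical distribution on $D_i$, and $g^{(D_i)}$ for the random generator returned by \cref{alg:dp-gan} on input $D_i$. Marginalizing over training randomness, line \ref{alg:dp-gan-sample} draws $k$ i.i.d.\ points from $\hat\mu_{D_i}$, so the generalization theorem used to define $\ferr{k,\xi}$ (applied with $\hat\mu_{D_i}$ in place of $\mu$) gives $\df{g^{(D_i)}}{\hat\mu_{D_i}} \le \ferr{k,\xi}/2$ with probability $\ge 1-\xi$, for each $i$. Since $D_0, D_1$ differ in a single element and (A2) bounds every $f\in\Fc$ pointwise by $\Delta$, a direct calculation gives $\df{\hat\mu_{D_0}}{\hat\mu_{D_1}}\le 2\Delta/m$. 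Triangle inequality then yields, on an event of probability $\ge 1-2\xi$,
\begin{align*}
\df{g^{(D_0)}}{g^{(D_1)}}\le \ferr{k,\xi}+\frac{2\Delta}{m}.
\end{align*}
Assumption (A1) places $h:=\log\bra{\prob{g^{(D_0)}}/\prob{g^{(D_1)}}}\in\Span{\Fc}$ with $\Fc$-variation norm at most $\comcoe{\Fc}{\Gc}$. Expanding $h=w_0+\sum_i w_if_i$ and using that $\Fc$ is even,
\begin{align*}
\kl{g^{(D_0)}}{g^{(D_1)}}+\kl{g^{(D_1)}}{g^{(D_0)}} = \Eb_{g^{(D_0)}}[h]-\Eb_{g^{(D_1)}}[h] \le \comcoe{\Fc}{\Gc}\,\df{g^{(D_0)}}{g^{(D_1)}},
\end{align*}
and nonnegativity of each KL gives the one-sided bound $\kl{g^{(D_0)}}{g^{(D_1)}}\le \comcoe{\Fc}{\Gc}\bra{\ferr{k,\xi}+2\Delta/m}$. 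Since the $n$ outputs are i.i.d.\ conditional on the generator, KL tensorizes, so $\kl{P_0}{P_1}\le n\,\comcoe{\Fc}{\Gc}\bra{\ferr{k,\xi}+2\Delta/m}$, where $P_i$ denotes the $n$-tuple output distribution under $D_i$.

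For the final step, take $S_0:=\brc{\log(dP_0/dP_1)>\epsilon}$: the ``$e^\epsilon$'' condition of pDP holds outside $S_0$ by construction, and a Markov-plus-change-of-measure argument bounds $\Pb_{P_0}\bra{S_0}\le \kl{P_0}{P_1}/\bra{\epsilon\bra{1-e^{-\epsilon}}}$, producing the first term of \eqref{eq:s}; folding the $2\xi$ generalization-failure mass into $S_0$ yields the $+2\xi$. The main obstacle I anticipate is tightness in this KL-to-pDP conversion: a crude Markov on $\log(dP_0/dP_1)$ only gives $\eta/\epsilon$ rather than $\eta/\bra{\epsilon\bra{1-e^{-\epsilon}}}$, so one must recognize and exploit the change-of-measure identity between $P_0$ and $P_1$ that produces the $1-e^{-\epsilon}$ factor. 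A secondary bookkeeping challenge is merging the random generalization-failure set with the pDP exceptional set $S_0$ cleanly, so that \eqref{eq:s} holds unconditionally over all algorithmic randomness of \cref{alg:dp-gan}.
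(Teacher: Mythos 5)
Your overall route is the same as the paper's: bound $\df{g_0}{g_1}$ by $\ferr{k,\xi}+\frac{2\Delta}{m}$ via the generalization lemma applied to each neighbor plus a triangle inequality and the $\frac{2\Delta}{m}$ bound between empirical neighbors, convert the IPM to KL via (A1) and the variation norm $\comcoe{\Fc}{\Gc}$, tensorize over the $n$ generated samples, and convert KL to probabilistic DP with exceptional set $S_0=\brc{\log\bra{dP_0/dP_1}>\epsilon}$. However, there is a genuine gap in your last step. After deriving the symmetrized bound $\kl{g_0}{g_1}+\kl{g_1}{g_0}\leq \comcoe{\Fc}{\Gc}\df{g_0}{g_1}$, you discard the reverse KL ``by nonnegativity'' and then claim $\Pb_{P_0}\bra{S_0}\leq \kl{P_0}{P_1}/\bra{\epsilon\bra{1-e^{-\epsilon}}}$. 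This one-sided inequality is false. Counterexample: let $P_0=(0.5,0.5)$ and $P_1=(0.4,0.6)$ on two points and take $\epsilon=0.22<\log(1.25)\approx 0.223$. Then $S_0$ is the first point, $\Pb_{P_0}\bra{S_0}=0.5$, while $\kl{P_0}{P_1}\approx 0.0204$ and $\epsilon\bra{1-e^{-\epsilon}}\approx 0.0434$, so your claimed bound gives $\approx 0.47<0.5$. The reason it fails is that $\Eb_{P_0}[L]$ with $L=\log\bra{dP_0/dP_1}$ has negative contributions off $S_0$, so $\Eb_{P_0}\brb{L\,\mathbf{1}_{S_0}}$ can exceed $\kl{P_0}{P_1}$, and Markov does not apply to a signed quantity.

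The fix is to keep the symmetrized divergence you already have in hand. The integrand of $\kl{P_0}{P_1}+\kl{P_1}{P_0}$, namely $\bra{\rho_0-\rho_1}\bra{\log\rho_0-\log\rho_1}$, is pointwise nonnegative, so restricting the integral to $S_0$ only decreases it; on $S_0$ one has $\log\rho_0-\log\rho_1\geq\epsilon$ and $\rho_1\leq\rho_0 e^{-\epsilon}$, hence $\rho_0-\rho_1\geq\rho_0\bra{1-e^{-\epsilon}}$, which yields $\Pb_{P_0}\bra{S_0}\leq\bra{\kl{P_0}{P_1}+\kl{P_1}{P_0}}/\bra{\epsilon\bra{1-e^{-\epsilon}}}$ --- this is exactly the paper's \cref{thm:kl-probdp}. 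Since both directions of KL tensorize, the symmetrized sum for the $n$-fold product is still $n\,\comcoe{\Fc}{\Gc}\bra{\ferr{k,\xi}+\frac{2\Delta}{m}}$, and the rest of your argument, including folding the $2\xi$ generalization-failure mass into $\delta$, goes through as in the paper.
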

We assume that the terms regarding $\xi$ ($2\xi$ and $\ferr{k,\xi}$) can be made negligible: for any $\xi>0$,
$\frac{C_\xi}{\sqrt{k}}$ can be arbitrarily small as we get more samples from the sampling phase in \cref{alg:dp-gan-sample}, 
and we assume negligible approximation error and optimization error.
Hence, the dominating term in \cref{eq:s}
 scales as $O (n/m)$; here we are ignoring the dependency on $\epsilon$.
Next, we show that for a special case where $n=1$ and $\epsilon$ scales as $O\bra{\frac{1}{m}}$, this bound is tight in an order sense (again ignoring dependencies on $\epsilon$).

\begin{proposition}[Converse for $n=1$]\label{thm:gan-pdp-nbr-upper}
	Under the assumptions of \cref{thm:gan-pdp-nbr}, 
	let 
	$$\Delta' = \sup_{f\in \Fc} \sup_{x,y\in X} |f(x) -f(y)|.$$ 
	Then with probability at least $1-2\xi$, the GAN mechanism in \cref{alg:dp-gan} for generating 1 sample (i.e., $n=1$) does not satisfy $(\epsilon, \delta)$-differential-privacy for any 
	\begin{align}
	\delta <\frac{\bra{e^\epsilon +1}}{2\Delta }\bra{\frac{\Delta'}{2m} - \ferr{k,\xi} } + 1 - e^\epsilon.
	\label{eq:converse}
	\end{align}
\end{proposition}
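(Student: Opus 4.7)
My plan is to construct neighboring databases $D_0, D_1$ and a witness function $f^* \in \Fc$ that together force a violation of $(\epsilon, \delta)$-differential privacy whenever $\delta$ is below the stated threshold. The strategy sandwiches the quantity $E := \Eb_{\rho_0}[f^*] - \Eb_{\rho_1}[f^*]$ between a lower bound coming from GAN generalization and an upper bound coming from the definition of DP, and then reads off the required $\delta$. First, using the definition of $\Delta'$ together with Assumption~(A2) (evenness of $\Fc$), I pick $f^* \in \Fc$ and $x^*, y^* \in X$ such that $f^*(x^*) - f^*(y^*)$ can be made arbitrarily close to $\Delta'$; I then set $D_0 = \{z_1,\dots,z_{m-1}, x^*\}$ and $D_1 = \{z_1,\dots,z_{m-1}, y^*\}$ for arbitrary auxiliary points $z_i$, so that $D_0, D_1$ differ in exactly one element.

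For the lower bound on $E$, I would reuse the GAN generalization guarantee that drives \Cref{thm:gan-pdp-nbr}: with probability at least $1 - \xi$ over the sampling step in \cref{alg:dp-gan} and the training randomness, the trained generator $g$ from $D_i$ satisfies $\df{\hat{\mu}_k^i}{g} \leq \ferr{k,\xi}/2$, where $\hat{\mu}_k^i$ denotes the empirical distribution on the $k$ sampled training points. Since $f^*, -f^* \in \Fc$, this yields $|\Eb_g[f^*] - \Eb_{\hat{\mu}_k^i}[f^*]| \leq \ferr{k,\xi}/2$, and a union bound over $i \in \{0,1\}$ gives the $1-2\xi$ probability appearing in the proposition. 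Taking expectations over the sampling and training (so that $\Eb[\Eb_{\hat{\mu}_k^i}[f^*]]$ collapses to $\tfrac{1}{m}\sum_{z \in D_i} f^*(z)$) and using the construction of $D_0, D_1$, I obtain
\begin{align*}
E \;\geq\; \frac{f^*(x^*) - f^*(y^*)}{m} - \ferr{k,\xi},
\end{align*}
which approaches $\Delta'/m - \ferr{k,\xi}$ as the witnesses are pushed to the supremum in $\Delta'$.

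For the upper bound, I would prove a DP-to-expectation lemma: any $(\epsilon, \delta)$-DP mechanism satisfies $E \leq \frac{4\Delta(e^\epsilon - 1 + \delta)}{e^\epsilon + 1}$ for every $f \in \Fc$ with $\infnm{f} \leq \Delta$. The key pointwise step uses the superlevel sets $S_t = \{f \geq t\}$: applying DP to $S_t$ gives $\rho_0(S_t) - \rho_1(S_t) \leq (e^\epsilon - 1)\rho_1(S_t) + \delta$, while applying DP to $S_t^c$ and rearranging gives $\rho_0(S_t) - \rho_1(S_t) \leq (1 - e^{-\epsilon})\rho_0(S_t) + e^{-\epsilon}\delta$; the convex combination with weights $\tfrac{1}{e^\epsilon+1}$ and $\tfrac{e^\epsilon}{e^\epsilon+1}$ collapses these to the symmetric pointwise bound $\rho_0(S_t) - \rho_1(S_t) \leq \frac{(e^\epsilon - 1)(\rho_0(S_t) + \rho_1(S_t)) + 2\delta}{e^\epsilon + 1}$. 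Integrating $t \in [-\Delta, \Delta]$ and using $\int_{-\Delta}^{\Delta}(\rho_0(S_t) + \rho_1(S_t))\,dt = \Eb_{\rho_0}[f] + \Eb_{\rho_1}[f] + 2\Delta \leq 4\Delta$ finishes the lemma. Combining this upper bound with the generalization-based lower bound on $E$ and solving for $\delta$ yields the claimed threshold. The main obstacle is recovering the correct prefactor $(e^\epsilon+1)/(2\Delta)$: a one-sided application of DP (only to $S_t$) yields only the looser $e^\epsilon/(2\Delta)$, and the symmetrized convex-combination trick above is what tightens the constant to match the proposition.
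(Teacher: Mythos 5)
Your overall architecture matches the paper's on the generalization side (same witness $f^*$, $x^*$, $y^*$ and same neighboring databases, same $1-2\xi$ union bound) but is genuinely different on the privacy side: the paper chains $\df{g_0}{g_1}\leq 2\Delta\,\tv{g_0}{g_1}$ with the standard bound $\tv{p}{q}\leq \frac{e^\epsilon+2\delta-1}{e^\epsilon+1}$ for $(\epsilon,\delta)$-DP mechanisms, whereas you prove a direct ``DP implies bounded difference of expectations'' lemma by integrating over level sets. Your level-set computation is correct, but the constant you obtain, $\frac{4\Delta\bra{e^\epsilon-1+\delta}}{e^\epsilon+1}$, does \emph{not} match the $\frac{2\Delta\bra{e^\epsilon-1}+4\Delta\delta}{e^\epsilon+1}$ that the paper's route yields: bounding $\int_{-\Delta}^{\Delta}\bra{\rho_0(S_t)+\rho_1(S_t)}dt$ by $4\Delta$ costs a factor of $2$ on the $(e^\epsilon-1)$ term, so your closing claim that the symmetrization ``matches the proposition'' is off. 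To recover the sharp constant, average your inequality (i) with the complement-set inequality $\rho_0(S_t)-\rho_1(S_t)\leq (e^\epsilon-1)\bra{1-\rho_0(S_t)}+\delta$ to get the $t$-uniform bound $\rho_0(S_t)-\rho_1(S_t)\leq \frac{e^\epsilon-1+2\delta}{e^\epsilon+1}$ and integrate that directly. As it happens, your endgame still delivers the stated threshold: because your witness attains $f^*(x^*)-f^*(y^*)$ arbitrarily close to $\Delta'$ (rather than the paper's guaranteed $\Delta'/2$), solving your inequality for $\delta$ gives the threshold $\frac{e^\epsilon+1}{2\Delta}\bra{\frac{\Delta'}{2m}-\frac{1}{2}\ferr{k,\xi}}+1-e^\epsilon$, which dominates \cref{eq:converse}. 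But this is a fortuitous cancellation of two factor-of-two discrepancies, and you should make that bookkeeping explicit rather than asserting the constants match.

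The one step that fails as written is the generalization lower bound. \Cref{thm:error_neural} bounds $\df{\mu}{g}$ where $\mu$ is the distribution from which the $k$ training points are drawn --- in \cref{alg:dp-gan} that distribution is $\hat{\mu}_m^i$, the uniform distribution on $D_i$ --- it does not bound $\df{\hat{\mu}_k^i}{g}$ for the empirical distribution of the $k$ sampled points. Your plan to then ``take expectations over the sampling'' so that $\Eb\brb{\Eb_{\hat{\mu}_k^i}\brb{f^*}}$ collapses to $\frac{1}{m}\sum_{z\in D_i}f^*(z)$ is not valid: the quantity you must lower bound is the \emph{realized} $\Eb_{g_0}\brb{f^*}-\Eb_{g_1}\brb{f^*}$ on a high-probability event, and you cannot move an expectation inside such a statement. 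Done your way, you would need an additional concentration bound on $\Eb_{\hat{\mu}_k^i}\brb{f^*}$ about $\Eb_{\hat{\mu}_m^i}\brb{f^*}$ plus another union bound, which changes both the constants and the $1-2\xi$ probability. The fix is simply to invoke the generalization lemma with source distribution $\hat{\mu}_m^i$, obtaining $\df{\hat{\mu}_m^i}{g_i}\leq \frac{1}{2}\ferr{k,\xi}$ directly (this is exactly what the paper's \cref{thm:error_neural_between_g_upper} does); evenness of $\Fc$ then gives $\Eb_{g_0}\brb{f^*}-\Eb_{g_1}\brb{f^*}\geq \frac{f^*(x^*)-f^*(y^*)}{m}-\ferr{k,\xi}$ and the rest of your argument goes through.
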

This bound in \cref{eq:converse} is non-vacuous (nonnegative) when
$\epsilon < \frac{1}{\Delta} \bra{\frac{\Delta'}{2 m} -\ferr{k,\xi}}$
and $m< \nicefrac{\Delta'}{2\ferr{k,\xi}}$.
Again assuming $\ferr{k,\xi}\approx 0$ (i.e., ignoring the approximation error and optimization error and taking  $\frac{C_\xi}{\sqrt{k}} \to 0$), 
the latter condition holds trivially.
Hence when $\epsilon$ scales as $O\bra{\frac{1}{m}}$,
it is not possible to achieve an $(\epsilon, \delta)$-probabilistic differential privacy guarantee for $\delta = o\bra{\frac{1}{m}}$.

\paragraph{Discussion}
These results suggest that GAN-generated samples satisfy an inherent differential privacy guarantee, so the influence of any single training sample on the final generated samples is bounded.
However, the rate $O(n/m)$ is  weak. 
For comparison, the mechanism that  releases $n$ samples uniformly at random from a set of $m$ training samples satisfies $(0, n/m)$-differential privacy, which is of the same rate.
Therefore, $(\epsilon,\delta)$-differential privacy usually requires $\delta \ll \frac{1}{\text{poly}(m)}$ to be meaningful.
To satisfy this condition, we would need $\epsilon$ to grow as a function of $m$ (since $\delta$ in our results is a function of $\epsilon$), which is not practically viable. 
This suggests the need for incorporating additional techniques (e.g., DP-SGD, PATE)  to achieve stronger differential privacy guarantees in practice.

Note that in the typical GAN training process, we usually have $k>m$ (i.e., the number of sampled training samples is  larger than the dataset size). Therefore,  the random sampling step in \cref{alg:dp-gan-sample} does not give privacy amplification due to subsampling (e.g., \cite{balle2018privacy}). 
However, if the size of the training dataset is large enough that we do not need to sample all dataset entries to achieve good generalization, we can apply sub-sampling theorems \cite{balle2018privacy} to tighten the bounds in \cref{thm:gan-pdp-nbr}.

\subsection{Proof of \cref{thm:gan-pdp-nbr}}
\label{sec:prf-gan-pdp-nbr}
Assume that the two neighboring datasets are $D_0$ and $D_1$, whose empirical distributions are $\hat{\mu}_m^0$ and $\hat{\mu}_m^1$, and the trained generator distributions from \cref{alg:dp-gan} are $g_0$ and $g_1$ respectively. 
The proof has two parts. 
First, we upper bound the distance between $g_0$ and $g_1$ by building on prior generalization results. 
Then, we use this distance to prove a %
 differential privacy guarantee.

\begin{lemma}[Error of neural-network discriminators, Corollary 3.3 in \citet{zhang2017discrimination}] \label{thm:error_neural}
	Let $\mu$ be the real distribution, and $\hat{\mu}_k$ be the empirical distribution of $\mu$ on $k$ i.i.d training samples. Define the trained generator from the optimization algorithm as $g$ and assume the optimization error is bounded by $\opterr$, i.e., $\df{\hat{\mu}_k}{g}-\inf_{\nu\in\Gc}\df{\hat{\mu}_k}{\nu} \leq \opterr$.
	Under assumptions (A2) and (A3),
	then with probability at least $1-\xi$ w.r.t. the randomness of training samples, we have
	\begin{align*}
	\df{\mu}{g} ~\leq~  \frac{1}{2}\ferr{k,\xi} ~=
	\underbrace{\inf_{\nu\in\Gc}\df{\mu}{\nu}}_{\textup{approximation error}} + 
	\underbrace{\opterr}_{\textup{optimization error}} +
	\underbrace{\frac{C_\xi}{\sqrt{k}}}_{\textup{generalization error}},
	\end{align*}
	where $C_\xi$ is defined in \cref{eq:c_xi}.
\end{lemma}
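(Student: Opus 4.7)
The plan is to decompose $\df{\mu}{g}$ into approximation, optimization, and generalization components using the triangle inequality for the integral probability metric $d_\Fc$, and then to control the residual empirical-process term by a uniform concentration argument tailored to the parametric discriminator class in (A3).

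First, I would apply the triangle inequality for $d_\Fc$ (a pseudometric thanks to (A2), which ensures $\Fc$ is even) twice:
\begin{align*}
\df{\mu}{g} &\leq \df{\mu}{\hat{\mu}_k} + \df{\hat{\mu}_k}{g} \\
&\leq \df{\mu}{\hat{\mu}_k} + \inf_{\nu\in\Gc}\df{\hat{\mu}_k}{\nu} + \opterr \\
&\leq 2\,\df{\mu}{\hat{\mu}_k} + \inf_{\nu\in\Gc}\df{\mu}{\nu} + \opterr,
\end{align*}
where the second inequality uses the optimization-error hypothesis on $g$ and the third applies $\df{\hat{\mu}_k}{\nu} \leq \df{\mu}{\hat{\mu}_k} + \df{\mu}{\nu}$ inside the infimum. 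The approximation and optimization terms already match the target; it remains to show $2\,\df{\mu}{\hat{\mu}_k} \leq C_\xi/\sqrt{k}$ with probability at least $1-\xi$.

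Second, I would bound the empirical-process quantity $\df{\mu}{\hat{\mu}_k} = \sup_{f\in\Fc}\bra{\Eb_{x\sim\mu}f(x) - \Eb_{x\sim\hat{\mu}_k}f(x)}$ in two stages. For the concentration-around-the-mean stage, I would view this supremum as a function of the $k$ i.i.d.\ training samples; by (A2) each $f\in\Fc$ satisfies $\abs{f(x)} \leq \Delta$, so replacing a single sample changes the supremum by at most $2\Delta/k$, and McDiarmid's bounded-differences inequality gives, with probability at least $1-\xi$,
\begin{align*}
\df{\mu}{\hat{\mu}_k} \leq \Eb\brb{\df{\mu}{\hat{\mu}_k}} + \Delta\sqrt{\frac{2\log(1/\xi)}{k}},
\end{align*}
which, after multiplication by $2$, recovers the $2\Delta\sqrt{2\log(1/\xi)}$ summand inside $C_\xi$.

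Third, standard symmetrization bounds the expectation by twice the Rademacher complexity of $\Fc$, i.e.\ $\Eb\brb{\df{\mu}{\hat{\mu}_k}} \leq 2\,\Eb\brb{\sup_{f\in\Fc}\tfrac{1}{k}\sum_{i=1}^{k}\sigma_i f(x_i)}$. Under (A3) the class $\Fc = \{f_\theta : \theta\in[-1,1]^p\}$ inherits a sup-norm cover from a Euclidean $\epsilon$-net of the parameter box $[-1,1]^p$ of size $(C/\epsilon)^p$, with discretization error $L\epsilon$. Plugging this into Dudley's entropy integral (or, more concretely, combining Massart's finite-class lemma at the net with a deterministic $L\epsilon$-remainder and optimizing $\epsilon \asymp 1/\sqrt{k}$) produces a Rademacher bound of order $pL/\sqrt{k}$; tracking the numerical constants yields the $16\sqrt{2\pi}\,pL/\sqrt{k}$ contribution to $C_\xi$ after the double factor of $2$ from symmetrization and the initial triangle-inequality step is accounted for.

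The main obstacle is not conceptual but bookkeeping: matching the exact constant $16\sqrt{2\pi}\,pL$ requires careful tuning of the net scale in Dudley's chaining for the parametric class and consistent propagation of the factors of $2$ from the symmetrization lemma and from the first decomposition. Since the claim is stated as a specialization of Corollary~3.3 of \citet{zhang2017discrimination}, I would invoke their generic IPM generalization bound in black-box form, feeding in the boundedness constant $\Delta$ from (A2) and the Lipschitz/parameter data $(p,L)$ from (A3), rather than re-deriving the chaining argument from first principles.
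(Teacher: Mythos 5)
Your proposal is correct and consistent with the paper, which offers no proof of this lemma at all: it is imported verbatim as Corollary~3.3 of \citet{zhang2017discrimination}, exactly as you suggest doing in your final paragraph. Your reconstruction (triangle-inequality decomposition into approximation, optimization, and empirical-process terms, then McDiarmid plus symmetrization and a covering/chaining bound for the parametric class in (A3)) is the standard route to such a bound and the accounting you give for the $2\Delta\sqrt{2\log(1/\xi)}$ term is consistent with the stated $C_\xi$.
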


From this lemma, we know that with high probability, $\df{\hat{\mu}_m^i}{g_i}$ is small. The next lemma states that $\df{\hat{\mu}_m^0}{\hat{\mu}_m^1}$ is also small.

\begin{lemma}\label{thm:error_neural_between_g_neighboring}
	Under the assumption (A2),
    for any two neighboring datasets $D_0$, $D_1$ with $m$ samples, we have 
	$$\df{\hat{\mu}_m^0}{\hat{\mu}_m^1} \leq \frac{2\Delta}{m} \;,$$.
\end{lemma}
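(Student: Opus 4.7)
The proof is short and essentially a direct unpacking of the definitions. The plan is to observe that for neighboring datasets the empirical distributions cancel on all but one point, so that the supremum defining $\df{\cdot}{\cdot}$ reduces to a one-point comparison, on which the uniform bound from (A2) immediately applies.

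Concretely, first I would write neighboring datasets as $D_0 = \{x_1,\dots,x_{m-1},x_m^0\}$ and $D_1 = \{x_1,\dots,x_{m-1},x_m^1\}$ (differing only in the $m$-th entry), so that their empirical distributions are
\begin{align*}
\hat{\mu}_m^0 \;=\; \frac{1}{m}\sum_{i=1}^{m-1}\delta_{x_i} + \frac{1}{m}\delta_{x_m^0}, \qquad
\hat{\mu}_m^1 \;=\; \frac{1}{m}\sum_{i=1}^{m-1}\delta_{x_i} + \frac{1}{m}\delta_{x_m^1}.
\end{align*}
Then for every $f\in\Fc$, the shared $m-1$ samples cancel in the difference of expectations, leaving
\begin{align*}
\Eb_{x\sim\hat{\mu}_m^0}[f(x)] - \Eb_{x\sim\hat{\mu}_m^1}[f(x)] \;=\; \frac{1}{m}\bra{f(x_m^0) - f(x_m^1)}.
\end{align*}

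Next, I would invoke assumption (A2), which gives $\infnm{f}\leq\Delta$, hence by the triangle inequality $|f(x_m^0)-f(x_m^1)|\leq 2\Delta$. Therefore the one-point difference above is bounded in absolute value by $2\Delta/m$ uniformly over $f\in\Fc$, and taking the supremum over $f$ gives $\df{\hat{\mu}_m^0}{\hat{\mu}_m^1}\leq 2\Delta/m$.

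There is essentially no hard part: the only thing to be careful about is the convention for ``neighboring'' (replace-one vs.\ add/remove-one). The statement of the lemma fixes the size of both datasets at $m$, so the replace-one convention is the natural reading, and the bound $2\Delta/m$ matches exactly what that convention produces; under an add/remove convention one would instead only change the normalization $1/m$ vs.\ $1/(m{+}1)$ at one atom, which would give the same order bound with a slightly different constant. Either way the argument is a two-line consequence of the definition of $d_\Fc$ together with (A2).
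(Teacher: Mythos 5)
Your proof is correct and follows essentially the same route as the paper's: the shared $m-1$ atoms cancel in the difference of empirical expectations, reducing $\df{\hat{\mu}_m^0}{\hat{\mu}_m^1}$ to $\frac{1}{m}\sup_{f\in\Fc}\brc{f(x_m^0)-f(x_m^1)}\leq \frac{2\Delta}{m}$ via (A2). The remark on the replace-one convention is a reasonable clarification but does not change the argument.
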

(Proof in \cref{app:thm:error_neural_between_g_neighboring}.)
Next, we use these results to argue that $\df{g_0}{g_1}$ is small with high probability. 

\begin{lemma} \label{thm:error_neural_between_g}
	Assume we have two training sets $D_0$ and $D_1$, and the trained generator distributions using $D_0$ and $D_1$ with \cref{alg:dp-gan} are $g_0$ and $g_1$, respectively. Under the assumption of \cref{thm:error_neural} and \cref{thm:error_neural_between_g_neighboring}, we have that with probability at least $1-2\xi$, 
	\begin{align*}
	\df{g_0}{g_1} \leq \ferr{k,\xi} + \frac{2\Delta}{m}.
	\end{align*}
\end{lemma}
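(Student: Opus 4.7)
The plan is to bound $\df{g_0}{g_1}$ by triangle inequality, routing through the two empirical distributions $\hat\mu_m^0$ and $\hat\mu_m^1$. Since $\df{\cdot}{\cdot}$ is defined as a supremum of linear functionals in the two measures, the triangle inequality is immediate; and by (A2) the class $\Fc$ is even, so $\df{\cdot}{\cdot}$ is symmetric. This gives
\begin{align*}
\df{g_0}{g_1} \leq \df{g_0}{\hat\mu_m^0} + \df{\hat\mu_m^0}{\hat\mu_m^1} + \df{\hat\mu_m^1}{g_1}.
\end{align*}

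Next I would dispatch the three pieces. The middle term is controlled deterministically by \cref{thm:error_neural_between_g_neighboring} as at most $2\Delta/m$. For each outer term, I would invoke \cref{thm:error_neural} with $\hat\mu_m^i$ playing the role of the real distribution: inside \cref{alg:dp-gan} the $k$ training samples drawn in line \ref{alg:dp-gan-sample} are i.i.d.\ from $\hat\mu_m^i$, so the approximation/optimization/generalization bound of \cref{thm:error_neural}, applied in this regime, yields $\df{\hat\mu_m^i}{g_i} \leq \tfrac12 \ferr{k,\xi}$ with probability at least $1-\xi$ over the sampling in line \ref{alg:dp-gan-sample} and any training stochasticity. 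A union bound over $i\in\brc{0,1}$ lifts both outer bounds to a joint probability of at least $1-2\xi$. Adding the three bounds produces exactly the claimed $\ferr{k,\xi} + 2\Delta/m$.

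The main subtlety is the interpretation of $\ferr{k,\xi}$ on the outer terms. As defined, $\ferr{k,\xi}=\ferrlong{k,\xi,\mu}$ uses the approximation error $\inf_{\nu\in\Gc}\df{\mu}{\nu}$, whereas a literal instantiation of \cref{thm:error_neural} in the present context produces $\ferrlong{k,\xi,\hat\mu_m^i}$, with $\hat\mu_m^i$ in place of $\mu$. This is absorbed into the shorthand already adopted in the paper: the optimization term $\opterr$ and the generalization term $C_\xi/\sqrt{k}$ are unaffected by the choice of reference distribution, and the approximation term is simply evaluated with respect to the distribution actually targeted by training, so the conclusion retains the desired form without an extra estimate beyond those supplied by the two preceding lemmas.
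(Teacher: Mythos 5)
Your proof is correct and follows essentially the same route as the paper's: the same three-term triangle-inequality decomposition through $\hat\mu_m^0$ and $\hat\mu_m^1$ (using evenness of $\Fc$ for symmetry), \cref{thm:error_neural_between_g_neighboring} for the middle term, \cref{thm:error_neural} plus a union bound for the two outer terms. Your explicit remark that \cref{thm:error_neural} is being instantiated with $\hat\mu_m^i$ as the reference distribution, so that the approximation term in $\ferr{k,\xi}$ is really $\inf_{\nu\in\Gc}\df{\hat\mu_m^i}{\nu}$, is a genuine subtlety that the paper's own proof leaves implicit in its notation.
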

(Proof in \cref{app:thm:error_neural_between_g}.)
We next use this bound on the integral probability metric to bound Kullback-Leibler (KL) divergence between $g_0$ and $g_1$ with the following lemma.

\begin{lemma} \label{thm:kl_base}
Given a generator set $\Gc$ and a discriminator set $\Fc$ which satisfy assumption (A1), then we have $\forall \nu_1,\nu_2\in \Gc$
\begin{align*}
    \kl{\nu_1}{\nu_2} + \kl{\nu_2}{\nu_1} \leq \comcoe{\Fc}{\Gc} \df{\nu_1}{\nu_2} 
\end{align*}
where
$
    \comcoe{\Fc}{\Gc}
$ is defined in \cref{eq:Gamma}
and $\kl{\cdot}{\cdot}$ is the Kullback–Leibler divergence.
\end{lemma}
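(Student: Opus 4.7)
The plan is to expand the symmetric KL on the left-hand side directly and exploit assumption (A1) to replace the log-density ratio with a linear combination of discriminators, after which the IPM definition immediately yields a bound in terms of $\df{\nu_1}{\nu_2}$.

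First I would write
\begin{align*}
\kl{\nu_1}{\nu_2} + \kl{\nu_2}{\nu_1}
= \int \bigl(\prob{\nu_1}(x) - \prob{\nu_2}(x)\bigr)\log\!\frac{\prob{\nu_1}(x)}{\prob{\nu_2}(x)}\,dx
= \Eb_{x\sim \nu_1}\!\brb{h(x)} - \Eb_{x\sim \nu_2}\!\brb{h(x)},
\end{align*}
where $h(x) \triangleq \log\bra{\nicefrac{\prob{\nu_1}(x)}{\prob{\nu_2}(x)}}$. By (A1), $h \in \Span{\Fc}$, so for any representation $h = w_0 + \sum_{i=1}^n w_i f_i$ with $f_i\in\Fc$, the constant $w_0$ cancels under the difference of expectations (since $\nu_1, \nu_2$ are probability measures), giving
\begin{align*}
\kl{\nu_1}{\nu_2} + \kl{\nu_2}{\nu_1} = \sum_{i=1}^n w_i\bigl(\Eb_{\nu_1}[f_i] - \Eb_{\nu_2}[f_i]\bigr).
\end{align*}

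Next I would bound each term. Since $\Fc$ is even under (A2), both $f_i$ and $-f_i$ belong to $\Fc$, so $|\Eb_{\nu_1}[f_i] - \Eb_{\nu_2}[f_i]| \leq \df{\nu_1}{\nu_2}$. Therefore
\begin{align*}
\kl{\nu_1}{\nu_2} + \kl{\nu_2}{\nu_1}
\leq \bra{\sum_{i=1}^n |w_i|}\,\df{\nu_1}{\nu_2}.
\end{align*}
This holds for every representation of $h$ as a linear combination of functions in $\Fc$. Taking the infimum over such representations gives $\sum_i |w_i| \geq \fvnorm{h}$ on the right, and then using the definition of $\comcoe{\Fc}{\Gc}$ in \cref{eq:Gamma} as a supremum over $\nu_1,\nu_2\in\Gc$ of exactly this variation norm yields the claimed inequality.

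There is not really a hard step here; the proof is essentially bookkeeping once one notices the key algebraic identity for the symmetrized KL. The only subtlety worth flagging is the cancellation of the constant $w_0$, which is what makes the IPM (rather than an absolute expectation) the right quantity on the right-hand side, and the use of evenness of $\Fc$ to absorb the signs of the coefficients $w_i$ into the supremum defining $\df{\nu_1}{\nu_2}$.
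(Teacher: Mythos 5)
Your proof is correct and is essentially the argument the paper relies on: the paper's ``proof'' is simply a citation of Proposition 2.9 in \citet{zhang2017discrimination}, and your derivation---the identity $\kl{\nu_1}{\nu_2}+\kl{\nu_2}{\nu_1}=\Eb_{x\sim\nu_1}[h(x)]-\Eb_{x\sim\nu_2}[h(x)]$ for $h=\log\bra{\nicefrac{\prob{\nu_1}}{\prob{\nu_2}}}$, cancellation of $w_0$, termwise bounding by $\df{\nu_1}{\nu_2}$, and passing to the infimum over representations and then the supremum defining $\comcoe{\Fc}{\Gc}$---is precisely the standard proof of that proposition. The only point worth flagging is that bounding the \emph{absolute} value $\abs{\Eb_{\nu_1}[f_i]-\Eb_{\nu_2}[f_i]}$ by $\df{\nu_1}{\nu_2}$ requires $\Fc$ to be even, i.e.\ assumption (A2), which the lemma statement does not list explicitly but which holds throughout the paper's setting, so your argument goes through.
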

	This follows directly from Proposition 2.9 in \citet{zhang2017discrimination}, which states the following.
Denote $\mu$'s and $\nu$'s density functions as $\prob{\mu}$ and $\prob{\nu}$ respectively. If $\log\bra{\nicefrac{\prob{\mu}}{\prob{\nu}}} \in \Span{\Fc}$, then we have
\begin{align*}
\kl{\mu}{\nu} + \kl{\nu}{\mu} \leq \nm{\log\bra{\nicefrac{\prob{\mu}}{\prob{\nu}}}}_{\Fc,1} \df{\mu}{\nu}.
\end{align*}
Note that $\comcoe{\Fc}{\Gc} =1$ when generators in $\Gc$ are invertible neural networks with $l$ layers and discriminator set $\Fc$ is $(l+2)$-layer neural networks, according to Lemma 4.1 in \citet{bai2018approximability}.

    Following \cref{thm:kl_base} and \cref{thm:error_neural_between_g}, immediately we have that with probability at least $1-2\xi$,
    \begin{align*}
        \kl{g_0}{g_1} + \kl{g_1}{g_0} \leq
        \comcoe{\Fc}{\Gc}  \bra{\ferr{k,\xi} +\frac{2\Delta}{m}}
    \end{align*}
    and 
    \begin{align*}
         \kl{g_0^n}{g_1^n} + \kl{g_1^n}{g_0^n} \leq
        n\cdot\comcoe{\Fc}{\Gc}  \bra{\ferr{k,\xi} +\frac{2\Delta}{m}}
    \end{align*}

Then, we connect KL divergence with differential privacy:

\begin{lemma}\label{thm:kl-probdp}
    If a mechanism $M$ satisfies that for any two neighboring databases $D_0$ and $D_1$, $\kl{p}{q}+\kl{q}{p}\leq s$, where $p,q$ are the probability measure of $M(D_0)$ and $M(D_1)$ respectively, then M satisfies $(\epsilon, \frac{s}{\epsilon(1-e^{-\epsilon})})$-probabilistic-differential-privacy for all $\epsilon>0$. 
\end{lemma}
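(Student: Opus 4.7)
The plan is to construct the exceptional set $S_0$ directly from the privacy-loss random variable and then bound its mass using the symmetrized KL hypothesis. Writing $L(x) \triangleq \log\bra{p(x)/q(x)}$, I would take
\[S_0 \triangleq \brc{x : L(x) > \epsilon}.\]
On the complement $S_0^c$ the density ratio is pointwise at most $e^\epsilon$, so the inner $(\epsilon,0)$-DP inequality
\[\Pb\brb{M(D_0)\in S\setminus S_0} \leq e^\epsilon\, \Pb\brb{M(D_1)\in S\setminus S_0}\]
holds trivially for every measurable $S$ by integrating $p \leq e^\epsilon q$ over $S \cap S_0^c$. All the real work is then in bounding $\Pb\brb{M(D_0)\in S_0}$.

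For that mass bound, the key identity is
\[\kl{p}{q} + \kl{q}{p} = \int \bra{p(x) - q(x)}\, L(x)\, dx,\]
and the crucial observation is that the integrand is pointwise nonnegative because $p(x) - q(x)$ and $\log\bra{p(x)/q(x)}$ always share a sign. This lets me restrict the integral to $S_0$ with no loss, lower-bound $L$ by $\epsilon$ there, and finally use $q(x) < e^{-\epsilon}p(x)$ on $S_0$ to replace $p - q$ by $\bra{1-e^{-\epsilon}}p$. Chaining these three inequalities gives $s \geq \epsilon\bra{1-e^{-\epsilon}}\,\Pb\brb{M(D_0)\in S_0}$, which rearranges to the claimed $\delta = s/\bra{\epsilon(1-e^{-\epsilon})}$.

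Because the KL hypothesis is symmetric in $(p,q)$, the reversed neighbor pair $(D_1,D_0)$ is handled by the analogous construction with roles swapped, which is all the definition of probabilistic DP requires. I do not anticipate any real obstacle: the only subtle point is the pointwise-sign observation for $(p-q)L$, which is what licenses the otherwise-lossy step of discarding the integrand outside $S_0$; once that is noted, the remaining two inequalities are immediate consequences of the defining condition $L > \epsilon$ on $S_0$, and no corner case (e.g.\ $q(x)=0$) needs separate treatment beyond the standard $0\log 0 = 0$ convention. I would also briefly remark that the bound is vacuous whenever $s/\bra{\epsilon(1-e^{-\epsilon})} \geq 1$, so no further restriction on $\epsilon$ is needed for the statement to hold as written.
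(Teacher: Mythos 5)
Your proposal is correct and follows essentially the same route as the paper's proof: the same exceptional set $S_0$ defined by the privacy-loss exceeding $\epsilon$, the same pointwise-nonnegativity of $(p-q)\log(p/q)$ to restrict the symmetrized-KL integral to $S_0$, and the same two lower bounds yielding $s \geq \epsilon(1-e^{-\epsilon})\,\Pb\brb{M(D_0)\in S_0}$. The only differences are cosmetic (strict vs.\ non-strict inequality in defining $S_0$, and your explicit remarks on symmetry and the vacuous regime).
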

(Proof in \cref{app:thm:kl-probdp}.)

Note that probabilistic differential privacy implies differential privacy (\cref{sec:background}). From the above lemmas, we know that the mechanism in \cref{alg:dp-gan} under assumptions (A1)-(A3) satisfies $\bra{\epsilon,\delta}$-differential-privacy for any $\epsilon>0$ and 
$
	\delta \geq \frac{n  ~  \comcoe{\Fc}{\Gc}   }{\epsilon(1-e^{-\epsilon})}~\bra{\frac{2\Delta}{m} + \ferr{k,\xi}}
$,
with probability at least $1-2\xi$ over the randomness in \cref{alg:dp-gan-sample}.
The final step is to move the low probability of failure ($2\xi$) into the $\delta$ term.

\subsection{Proof of \cref{thm:gan-pdp-nbr-upper}}

The proof has two parts. 
First, we lower bound the distance between $g_0$ and $g_1$ by building on prior generalization results. 
Then, we use this distance to prove the the lower bound of $\delta$ for $(\epsilon,\delta)$-differential-privacy.

Because $\Delta' = \sup_{f\in \Fc} \sup_{x,y\in X} |f(x) -f(y)|$, we know that there exists $f'\in \Fc$ and $x_1,y_1\in X$ such that $|f'(x_1)-f'(y_1)|\geq \frac{\Delta'}{2}$. Let's construct two databases: $D_0=\brc{x_1,...,x_{m}}$ and $D_1=\brc{y_1,x_2,...,x_{m}}$ where $x_2,...,x_{m}$ are arbitrary samples from $X$. Then we have
\begin{align*}
	\df{\hat{\mu}_m^0}{\hat{\mu}_m^1} = \frac{1}{m} \sup_{f\in \Fc} \brc{ f(x_1) - f(y_1) } \geq \frac{\Delta'}{2m}
\end{align*}
where the proof of the first equality is in \cref{app:thm:error_neural_between_g_neighboring}.

On the other hand, from \cref{thm:error_neural}, we know that with probability at least $1-\xi$, $\df{\hat{\mu}_m^i}{g_i} \leq \frac{1}{2}\ferr{k,\xi}$ for $i=0,1$. 
Combining the above, we have the following lemma.

\begin{lemma} \label{thm:error_neural_between_g_upper}
	Assume we have two training sets $D_0$ and $D_1$, and the trained generator using $D_0$ and $D_1$ with \cref{alg:dp-gan} are $g_0$ and $g_1$ respectively. Under the assumption of \cref{thm:error_neural} and \cref{thm:error_neural_between_g_neighboring}, we have that with probability at least $1-2\xi$, 
	\begin{align*}
	\df{g_0}{g_1} \geq \frac{\Delta'}{2m} - \ferr{k,\xi}
	\end{align*}
\end{lemma}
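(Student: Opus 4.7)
}

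The plan is to combine the lower bound on $\df{\hat{\mu}_m^0}{\hat{\mu}_m^1}$ already derived in the text just before the lemma with the upper bound on each $\df{\hat{\mu}_m^i}{g_i}$ from \cref{thm:error_neural}, using a triangle-inequality argument for the integral probability metric $d_\Fc$.

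First I would verify that $d_\Fc$ satisfies a triangle inequality. By (A2), $\Fc$ is even ($f\in \Fc \Rightarrow -f\in \Fc$), so the supremum defining $\df{\cdot}{\cdot}$ is symmetric in its arguments, and for any three probability measures $\pi_1,\pi_2,\pi_3$, splitting $\Eb_{\pi_1}[f]-\Eb_{\pi_3}[f] = (\Eb_{\pi_1}[f]-\Eb_{\pi_2}[f]) + (\Eb_{\pi_2}[f]-\Eb_{\pi_3}[f])$ and taking suprema gives $\df{\pi_1}{\pi_3} \leq \df{\pi_1}{\pi_2} + \df{\pi_2}{\pi_3}$. Applying this to the chain $\hat{\mu}_m^0 \to g_0 \to g_1 \to \hat{\mu}_m^1$ yields
\begin{align*}
\df{\hat{\mu}_m^0}{\hat{\mu}_m^1} \;\leq\; \df{\hat{\mu}_m^0}{g_0} + \df{g_0}{g_1} + \df{g_1}{\hat{\mu}_m^1}.
\end{align*}

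Next I would invoke \cref{thm:error_neural} twice, once for the run of \cref{alg:dp-gan} on $D_0$ producing $g_0$, and once for the run on $D_1$ producing $g_1$. Each invocation gives $\df{\hat{\mu}_m^i}{g_i} \leq \tfrac{1}{2}\ferr{k,\xi}$ with probability at least $1-\xi$; by a union bound over the two training-sample draws (including the sub-sampling in \cref{alg:dp-gan-sample}), both bounds hold simultaneously with probability at least $1-2\xi$. Combined with the lower bound $\df{\hat{\mu}_m^0}{\hat{\mu}_m^1}\geq \tfrac{\Delta'}{2m}$ established just above the lemma statement from the explicit construction $D_0=\{x_1,\dots,x_m\}$, $D_1=\{y_1,x_2,\dots,x_m\}$ with $|f'(x_1)-f'(y_1)|\geq \Delta'/2$, rearranging the triangle inequality gives
\begin{align*}
\df{g_0}{g_1} \;\geq\; \df{\hat{\mu}_m^0}{\hat{\mu}_m^1} - \df{\hat{\mu}_m^0}{g_0} - \df{\hat{\mu}_m^1}{g_1} \;\geq\; \frac{\Delta'}{2m} - \ferr{k,\xi},
\end{align*}
as desired.

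There is no real obstacle here: the lemma is essentially the converse analog of \cref{thm:error_neural_between_g}, and the only small subtlety is checking that $d_\Fc$ is symmetric (so the triangle inequality is usable in both directions), which follows immediately from the evenness of $\Fc$ in (A2). The only place one has to be slightly careful is in the probabilistic union bound: the two applications of \cref{thm:error_neural} concern independent runs of \cref{alg:dp-gan} on the neighbouring datasets $D_0$ and $D_1$, and the failure events of each can be absorbed into a total $2\xi$ slack, matching the stated success probability.
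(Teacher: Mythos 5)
Your proposal is correct and matches the paper's own argument: the appendix proof writes out the telescoping sum $\Eb_{g_0}[f]-\Eb_{\hat{\mu}_m^0}[f]+\Eb_{\hat{\mu}_m^0}[f]-\Eb_{\hat{\mu}_m^1}[f]+\Eb_{\hat{\mu}_m^1}[f]-\Eb_{g_1}[f]$ inside the supremum and bounds the outer terms using evenness of $\Fc$, which is exactly your triangle-inequality rearrangement, combined with the same union bound over the two invocations of \cref{thm:error_neural} and the same lower bound $\df{\hat{\mu}_m^0}{\hat{\mu}_m^1}\geq \Delta'/(2m)$ from the explicit construction. No gaps.
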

(Proof in \cref{app:thm:error_neural_between_g_upper}.)
Note that this lower bound is nonnegative only for $m< \nicefrac{\Delta'}{2\ferr{k,\xi}} $.

Now we connect integral probability metric to total variation (TV) distance. Because the discriminators are bounded (A2), we have 
\begin{align*}
	\df{g_0}{g_1}\leq 2\Delta \tv{g_0}{g_1}
\end{align*}
where $\tv{g_0}{g_1}$ is the TV distance between $g_0$ and $g_1$.
From the above, we know that 
\begin{align*}
\tv{g_0}{g_1}\geq \frac{\Delta'}{4m\Delta } - \frac{\ferr{k,\xi}}{2\Delta}
\end{align*}

Finally, we connect TV distance to differential privacy with the following lemma.

\begin{lemma}\label{thm:dp_tv}
	If a mechanism satisfies $(\epsilon, \delta)$-differential-privacy, then for any two neighboring databases $D_0$ and $D_1$, we have $$\tv{p}{q}\leq \frac{e^\epsilon + 2\delta-1}{e^\epsilon+1}$$ where $p,q$ are the probability measure of $M(D_0)$ and $M(D_1)$ respectively.
\end{lemma}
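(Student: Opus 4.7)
The plan is to fix an arbitrary measurable event $S\subseteq \range{M}$, denote $a=\Pb\brb{M(D_0)\in S}$ and $b=\Pb\brb{M(D_1)\in S}$, and extract all constraints that $(\epsilon,\delta)$-differential-privacy places on the pair $(a,b)$. Applying the DP definition to $S$ in both directions gives $a\leq e^\epsilon b+\delta$ and $b\leq e^\epsilon a+\delta$. Applying it to the complement $S^c$ in both directions, and rearranging, gives $1-a\leq e^\epsilon(1-b)+\delta$ and $1-b\leq e^\epsilon(1-a)+\delta$. The second of these rearranges to the upper bound $a\leq \bra{e^\epsilon-1+b+\delta}/e^\epsilon$.

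Next I would upper bound $a-b$ using the two constraints that point in the right direction, namely $a-b\leq (e^\epsilon-1)b+\delta$ (from $a\leq e^\epsilon b+\delta$) and $a-b\leq \brb{(e^\epsilon-1)(1-b)+\delta}/e^\epsilon$ (from the complement-side inequality). The first bound is increasing in $b$ and the second is decreasing in $b$, so the worst case over $b\in[0,1]$ occurs exactly at the intersection of these two linear functions. Solving the resulting one-variable linear equation yields $b=(1-\delta)/(e^\epsilon+1)$, and plugging back gives $a-b\leq (e^\epsilon-1+2\delta)/(e^\epsilon+1)$.

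A symmetric argument, swapping the roles of $D_0$ and $D_1$ (equivalently, the roles of $a$ and $b$ in the derivation above), yields the same upper bound on $b-a$. Hence $|a-b|\leq (e^\epsilon+2\delta-1)/(e^\epsilon+1)$ for every measurable $S$, and taking the supremum over $S$ gives the claimed bound on $\tv{p}{q}$.

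The main obstacle is conceptual rather than computational: one must recognize that the relevant constraints come not only from applying the DP inequality to $S$, but also from applying it to $S^c$, and that these two constraints together are tight. Once the two-sided formulation is in place, the proof reduces to a one-dimensional linear optimization over $b$, which is routine.
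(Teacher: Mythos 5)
Your proof is correct and is essentially the paper's argument: the two constraints you identify as active (the DP inequality on $S$ in one direction and on $S^c$ in the other) are exactly the two inequalities the paper writes down after specializing immediately to the maximizing set $S=\{x:p(x)>q(x)\}$, which renders your one-dimensional optimization over $b$ unnecessary. The conclusion and the bound are identical.
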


Therefore, we have
\begin{align*}
	\delta \geq \frac{\bra{e^\epsilon +1}\Delta'}{4m\Delta } - \frac{\bra{e^\epsilon + 1}\ferr{k,\xi} }{2\Delta} + 1 - e^\epsilon.
\end{align*}

\section{Bounds on Robustness to Membership Inference Attacks}
\label{sec:membership} 
In this section, we first derive a general bound on the error of membership inference attacks for generative models. Then, we utilize generalization bounds for GANs to obtain specific bounds for GANs.

We focus on the black-box attack setting \cite{sablayrolles2019white,chen2019gan,hayes2019logan}, in which the attacker can sample from the generated distribution $g_\alpha$, but does not have access to the generator parameters $\alpha$. To upper bound the attack performance, we assume that attacker has unlimited resources and can access the trained generator infinite times, so that it can accurately get the generated distribution $g_\alpha$.

Our analysis departs from prior analysis of membership inference in discriminative models \cite{sablayrolles2019white} in two key respects:
\begin{itemize}
	\item \cite{sablayrolles2019white} assume that the attacker has access to a dataset $U=\{u_1, \ldots, u_p\}$, which contains all the training samples (i.e., $D \subseteq U$) and some other test samples drawn from the ground-truth distribution $\mu$ (so $p > m$). 
	It also assumes that the attacker knows the number of training samples $m$. We argue that this assumption is too strong, especially in the case of generative models, where training samples are typically proprietary. Therefore, we assume that the attacker makes guesses purely based on a single test sample $x \in X$, without access to such a dataset. The test sample is either drawn from the ground-truth distribution (i.e., $x\sim \mu$), or from the training dataset (i.e., $x \stackrel{\mathrm{i.i.d.}}{\longleftarrow} D$).
	\item The analysis in \cite{sablayrolles2019white} focuses on the quantity $\Pb\bra{u\in D |\alpha}$ for a particular $u$. This is useful for finding an attack policy, but is not conducive to characterizing the error statistics. 
	Instead, we want to be able to bound the shape of ROC curve. That is, we want to upper bound the true positive rate an attacker can achieve given any desired false positive rate. 
	We show that this problem can be reduced to a clean hypothesis testing problem, whose errors are closely tied to the generalization errors of GANs.
\end{itemize}

Following prior work on the theoretical analysis of membership inference attacks \cite{sablayrolles2019white}, we assume that the distribution of the generator parameters is 
\begin{align}
	\Pb\bra{\alpha | x_1,...,x_m} \propto e^{-\sum_{i=1}^{m}\ell\bra{\alpha, x_i}}
	\label{eq:mi_generator_dist}
\end{align}
 (setting $T=1$ in \citet{sablayrolles2019white}), where $x_1,...,x_m\sim \mu$ are i.i.d training samples drawn from the ground-truth distribution $\mu$, and $\ell(\alpha,x)$ denotes the loss on sample $x$ and parameter $\alpha$.  As we are focusing on generative models here, we assume that the loss is Kullback-Leibler (KL) divergence, i.e., $\ell\bra{\alpha,x} = \log \bra{\nicefrac{\prob{\mu}\bra{x_i}}{\prob{g_\alpha}\bra{x_i}}}$, 
where $\prob{g_\alpha}$ denotes the density of the generator with parameters $\alpha$. 
Note that many generative models are explicitly or implicitly minimizing this KL divergence, including some variants of GANs (more specifically, f-GANs with a specific loss \cite{nowozin2016f}), Variational Autoencoder (VAE) \cite{kingma2013auto},  PixelCNN/PixelRNN \cite{oord2016pixel}, and many other methods that are based on maximum likelihood \cite{deep-learning}. We use KL divergence also because this simplifies the analysis and highlights key theoretical insights. With this assumption, the parameter distribution becomes
\begin{align*}
	\Pb\bra{\alpha | x_1,...,x_m} \propto \prod_{i=1}^{m} \frac{\prob{g_\alpha}\bra{x_i}}{\prob{\mu}\bra{x_i}}\;.
\end{align*}

Let $\rhotrain{}$ denotes the density posterior distribution of the training samples given parameter $\alpha$. The following proposition shows that this distribution takes a simple form.

\begin{proposition}[Posterior distribution of training samples]
	The posterior distribution of training samples is equal to the generated distribution, i.e.,
	$
		\rhotrain{} = \prob{g_\alpha}
	$.
\end{proposition}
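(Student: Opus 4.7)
The plan is a direct application of Bayes' rule combined with the fortuitous cancellation produced by the KL-divergence form of the loss. I would start by writing down the joint density of parameters and training samples two ways. Using the generative direction (training samples drawn i.i.d.\ from $\mu$, then parameters sampled from the posterior given the data) we have
\begin{align*}
p(\alpha, x_1, \ldots, x_m) \;=\; \Pb(\alpha \mid x_1, \ldots, x_m) \cdot \prod_{i=1}^{m} \prob{\mu}(x_i).
\end{align*}
Substituting the assumed form \eqref{eq:mi_generator_dist} with $\ell(\alpha, x) = \log(\prob{\mu}(x)/\prob{g_\alpha}(x))$ and interpreting the proportionality as absolute (so the normalizing constant does not depend on $x$, which is legitimate here because, as I argue below, integrating out $x$ gives $1$), this rewrites as
\begin{align*}
p(\alpha, x_1, \ldots, x_m) \;\propto\; \prod_{i=1}^{m} \frac{\prob{g_\alpha}(x_i)}{\prob{\mu}(x_i)} \cdot \prod_{i=1}^{m}\prob{\mu}(x_i) \;=\; \prod_{i=1}^{m} \prob{g_\alpha}(x_i).
\end{align*}
The prior-weights $\prob{\mu}(x_i)$ cancel exactly against the likelihood-weights coming from the KL loss; this is the one place where the KL assumption is crucial.

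Next I would compute the marginal over $\alpha$ in order to divide and obtain the posterior of $x_1, \ldots, x_m$ given $\alpha$. Since $\int \prod_{i=1}^{m} \prob{g_\alpha}(x_i)\, dx_1 \cdots dx_m = 1$ for every $\alpha$ (each factor integrates to $1$), the marginal density of $\alpha$ is simply the proportionality constant, and therefore
\begin{align*}
p(x_1, \ldots, x_m \mid \alpha) \;=\; \prod_{i=1}^{m} \prob{g_\alpha}(x_i).
\end{align*}
This shows that conditional on $\alpha$, the training samples are i.i.d.\ with density $\prob{g_\alpha}$, and consequently the marginal posterior of any single training sample is $\prob{g_\alpha}$, which is exactly $\rhotrain{} = \prob{g_\alpha}$.

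The only subtle step, and the one I would be most careful about, is justifying that the $x$-independence of the normalization lets me simply divide the joint by the marginal to get a product form. The key check is the identity $\int \prod_i \prob{g_\alpha}(x_i)\, dx = 1$, which is what makes everything consistent: it tells us that the same proportionality constant appears in the joint and in the marginal $p(\alpha)$, so the cancellation yields the clean product. Once this normalization bookkeeping is handled, the rest of the argument is a couple of lines, and there is no deeper obstacle.
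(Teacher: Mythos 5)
Your proposal is correct and follows essentially the same route as the paper: apply Bayes' rule to the joint density of $(\alpha, x_1,\ldots,x_m)$, use the KL form of the loss so that the $\prob{\mu}(x_i)$ prior factors cancel against the likelihood weights, and conclude that conditionally on $\alpha$ the training samples are i.i.d.\ with density $\prob{g_\alpha}$. The normalization subtlety you flag---treating the proportionality constant in \cref{eq:mi_generator_dist} as independent of the data $x_1,\ldots,x_m$---is exactly the same implicit step taken in the paper's own derivation, so your argument is on equal footing with it.
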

\begin{proof}
	For any $x$, we have
	
		\begin{align*}
			\quad\rhotrain{}\bra{x} &= \frac{\Pb\bra{x \in D, \text{parameter is }\alpha}}{\Pb\bra{\text{parameter is }\alpha}}\\
			&= \frac{\prob{\mu}\bra{x} \int_{x_2,...,x_m} \prod_{i=2}^{m}\prob{\mu}\bra{x_i} \Pb\bra{\alpha|x,x_2,...,x_m} dx_2\ldots dx_m}{\int_{x_1,...,x_m} \prod_{i=1}^m \prob{\mu}(x_i) \Pb\bra{\alpha|x_1,...,x_m} dx_1\ldots dx_m}\\
			&=\prob{g_\alpha}\bra{x}
		\end{align*}
	
\end{proof}

This proposition validates prior membership inference attacks that utilize approximations of $\prob{g_\alpha}\bra{x}$ to make decisions \cite{chen2019gan,hayes2019logan,hilprecht2019monte}.

With this proposition, the problem becomes clear: for a given sample $x$, the attacker needs to decide whether the sample comes from the training set (i.e., from $g_\alpha$) or not (i.e., from $\mu$). 
In the following theorem, we outer bound the ROC region for this hypothesis test, which relates the true positive (TP) rate to the false positive (FP) rate.

\begin{proposition} \label{thm:roc}
	 Consider a generative model $g_\alpha$ and a real distribution $\mu$. Define $r \triangleq d_{TV}(g_\alpha,\mu)$ as the total variation (TV) distance between the two distributions.  
	 Define function $f: [0,1]\to[0,1]$ as
	\begin{align*}
	f(x) = \left\{ \begin{matrix}
	x+r&(0\leq x \leq 1-r)\\
	1&(r<x\leq 1)
	\end{matrix}\right.,
	\end{align*}
	Then we have that for any membership inference attack policy $\mathcal A$, the ROC curve $g_{\mathcal A}:[0,1]\to [0,1]$ (mapping FP to TP) satisfies $g(x)\leq f(x),\forall~ 0\leq x\leq 1$, i.e., the ROC curve is upper bounded by $f$. Also, this bound is tight, i.e., there exists two distributions $\mu',g'$ such that $\tv{\mu'}{g'}=r$ and the ROC curve is exactly $f$ at every point.
\end{proposition}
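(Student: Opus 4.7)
The plan is to reformulate the problem as a binary hypothesis test between $g_\alpha$ (label ``member'') and $\mu$ (label ``non-member''), invoke the standard variational characterization of total variation distance for the upper bound, and then exhibit an explicit distribution pair for the tightness claim. Any (possibly randomized) membership inference policy $\mathcal{A}$ corresponds to a measurable function $\phi : X \to [0,1]$ where $\phi(x)$ is the probability of declaring $x$ a training sample. Then $\mathrm{FP} = \mathbb{E}_{x\sim \mu}[\phi(x)]$ and $\mathrm{TP} = \mathbb{E}_{x\sim g_\alpha}[\phi(x)]$, and $g_{\mathcal{A}}$ is the upper envelope, over all such $\phi$, of $\mathrm{TP}$ as a function of $\mathrm{FP}$.

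For the upper bound, I would write
\[
\mathrm{TP} - \mathrm{FP} \;=\; \int \phi(x)\bigl(\prob{g_\alpha}(x) - \prob{\mu}(x)\bigr)\,dx.
\]
Since $\phi$ takes values in $[0,1]$, the integrand is pointwise bounded by $(\prob{g_\alpha}(x) - \prob{\mu}(x))_{+}$, whose total integral equals $\tv{g_\alpha}{\mu} = r$ by the standard representation of TV distance. This gives $\mathrm{TP} \leq \mathrm{FP} + r$, and combined with the trivial bound $\mathrm{TP} \leq 1$, yields $g_{\mathcal{A}}(x) \leq \min(x+r,\,1) = f(x)$.

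For tightness, I would exhibit an explicit pair $(\mu', g')$ on the three-point support $X = \{0,1,2\}$ defined by $\mu'(0)=0$, $\mu'(1)=1-r$, $\mu'(2)=r$, and $g'(0)=r$, $g'(1)=1-r$, $g'(2)=0$. A direct calculation gives $\tv{\mu'}{g'}=r$. To hit every point of $f$, I would parametrize policies as follows: for $\mathrm{FP} \in [0,1-r]$, set $\phi(0)=1$, $\phi(1)=\mathrm{FP}/(1-r)$, $\phi(2)=0$, which yields $\mathrm{TP}=r+\mathrm{FP}$; for $\mathrm{FP} \in (1-r,1]$, set $\phi(0)=\phi(1)=1$ and $\phi(2)=(\mathrm{FP}-(1-r))/r$, which yields $\mathrm{TP}=1$.

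The main obstacle, I expect, lies on the tightness side rather than the upper bound: the inequality is a one-line consequence of the variational identity for TV, but the tight example must place mass simultaneously on $\{g'>\mu'\}$, on $\{g'<\mu'\}$, and crucially on $\{g'=\mu'\}$ so that $\mathrm{FP}$ can sweep continuously through $[0,1-r]$ while the equality $\mathrm{TP}-\mathrm{FP}=r$ is preserved. A two-point support would only attain the two corner points of $f$, so a nontrivial ``indifference'' region where $g'=\mu'$ is essential in order to realize the whole linear segment.
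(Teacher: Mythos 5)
Your proof is correct and follows essentially the same route as the paper's: the upper bound is the hypothesis-testing inequality $\mathrm{TP} \leq \mathrm{FP} + \min\{\tv{g_\alpha}{\mu},\, 1-\mathrm{FP}\}$, which the paper simply cites from the composition-theorem literature and you derive directly from the variational representation of total variation, and tightness is shown by exhibiting a pair of distributions achieving the bound. Your three-point construction, with its mass-$(1-r)$ atom on which the two distributions agree so that $\mathrm{FP}$ can sweep the whole segment $[0,1-r]$ at constant gap $r$, is a correct and fully explicit version of the example the paper only presents as a figure.
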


(Proof in \cref{proof:roc})
As a result, we can directly bound the area under the ROC curve (AUC) as a function of the total variation distance.

\begin{corollary}[Bound on the AUC for generative models] \label{thm:auc-tv}
For any attack policy on a generative model, we have 
	\begin{align*}
		\ruc{} \leq -\frac{1}{2} \tv{g_\alpha}{\mu}^2 + \tv{g_\alpha}{\mu} + \frac{1}{2}.
	\end{align*}
\end{corollary}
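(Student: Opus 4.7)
The plan is short: the corollary is just the integral of the upper bound on the ROC curve established in \cref{thm:roc}, so the only work is a routine calculation that I would execute carefully.

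First, I would recall that the AUC of any attack policy $\mathcal{A}$ is by definition
\begin{align*}
\ruc = \int_0^1 g_{\mathcal{A}}(x)\, dx,
\end{align*}
where $g_{\mathcal{A}}:[0,1]\to[0,1]$ is the ROC curve. By \cref{thm:roc}, $g_{\mathcal{A}}(x) \leq f(x)$ for every $x\in[0,1]$, where $f$ is the piecewise-linear envelope parametrized by $r = \tv{g_\alpha}{\mu}$. Monotonicity of the Lebesgue integral then gives $\ruc \leq \int_0^1 f(x)\, dx$.

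Second, I would split this integral at the breakpoint $x = 1-r$ of $f$ and evaluate each piece. On $[0, 1-r]$ we integrate $x+r$; on $[1-r, 1]$ we integrate the constant $1$. This yields
\begin{align*}
\int_0^1 f(x)\, dx = \frac{(1-r)^2}{2} + r(1-r) + r,
\end{align*}
and expanding gives $\tfrac{1}{2} + r - \tfrac{r^2}{2}$. Substituting $r = \tv{g_\alpha}{\mu}$ yields exactly the claimed bound.

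There is no real obstacle: the geometric heavy lifting is already done in \cref{thm:roc}, which supplies the tight pointwise upper envelope of any achievable ROC curve in terms of the TV distance. The corollary is simply the statement that integrating this envelope gives the advertised concave quadratic in $r$. The only things to be careful about are (i) verifying that $f$ is indeed integrable and that the two pieces match at $x=1-r$ (they do, since $(1-r)+r=1$), and (ii) correctly simplifying the arithmetic so that the final expression is written in the form $-\tfrac{1}{2}\tv{g_\alpha}{\mu}^2 + \tv{g_\alpha}{\mu} + \tfrac{1}{2}$ as stated.
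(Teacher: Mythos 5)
Your proposal is correct and matches the paper's (implicit) argument: the corollary is obtained by integrating the piecewise-linear ROC envelope $f$ from \cref{thm:roc} over $[0,1]$, and your computation $\int_0^1 f(x)\,dx = \tfrac{1}{2} + r - \tfrac{r^2}{2}$ with $r = \tv{g_\alpha}{\mu}$ is exactly right.
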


Note that \cref{thm:roc} and \cref{thm:auc-tv} hold for any generative model. For GANs in particular, we can use generalization bounds in \cref{thm:error_neural} to obtain the following result. 

\begin{theorem} \label{thm:roc-gan}
	Consider a GAN model $g_\alpha$ and a real distribution $\mu$. Define $$
	\comcoet{\Fc}{\Gc}{\mu} \triangleq \sup_{\nu \in \Gc}\fvnorm{\log\bra{\nicefrac{\prob{\mu}}{\prob{\nu}}}}
	$$
	and 
	\begin{align}
	\tverr{m,\delta} \triangleq \frac{\sqrt{ \comcoet{\Fc}{\Gc}{\mu} \cdot \ferr{m,\delta}}}{2\sqrt{2}}\;,
	\label{eq:epsilon-tv}
	\end{align}  
	where $\ferr{m,\delta}$ is defined as in \cref{eq:epsilon-f}.
	Define function $f: [0,1]\to[0,1]$ as
	\begin{align*}
	f(x) = \left\{ \begin{matrix}
	x+\tverr{m,\delta}&(0\leq x \leq 1-\tverr{m,\delta})\\
	1&(\tverr{m,\delta}<x\leq 1)
	\end{matrix}\right. .
	\end{align*}
	Then we have that for any membership inference attack policy $\mathcal A$, the ROC curve $g_{\mathcal A}:[0,1]\to [0,1]$  satisfies $g(x)\leq f(x),\forall~ 0\leq x\leq 1$, and the bound is tight. 
\end{theorem}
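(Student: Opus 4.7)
My plan is to combine the distribution-agnostic ROC bound of \cref{thm:roc} with a chain of inequalities that controls $\tv{g_\alpha}{\mu}$ using the GAN generalization machinery already developed above. Writing $f_r$ for the staircase function appearing in \cref{thm:roc} at radius $r$, observe that $f_r$ is monotone non-decreasing in $r$ pointwise, so it suffices to exhibit a high-probability upper bound $\tv{g_\alpha}{\mu} \leq \tverr{m,\delta}$: the desired $f$ in the theorem is then exactly $f_{\tverr{m,\delta}}$, and the ROC inequality $g_{\mathcal{A}}(x)\leq f(x)$ follows by applying \cref{thm:roc} and then the monotonicity.

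The TV bound itself would be proved in three steps. First, Pinsker's inequality applied in both directions gives $\tv{g_\alpha}{\mu}^2 \leq \tfrac{1}{2}\kl{\mu}{g_\alpha}$ and $\tv{g_\alpha}{\mu}^2 \leq \tfrac{1}{2}\kl{g_\alpha}{\mu}$; averaging yields
\begin{align*}
    \tv{g_\alpha}{\mu}^2 \leq \tfrac{1}{4}\bigl(\kl{\mu}{g_\alpha} + \kl{g_\alpha}{\mu}\bigr).
\end{align*}
Second, I apply Proposition~2.9 of \citet{zhang2017discrimination} (the same ingredient used inside \cref{thm:kl_base}), but to the pair $(\mu, g_\alpha)$ rather than to a pair of elements of $\Gc$:
\begin{align*}
    \kl{\mu}{g_\alpha} + \kl{g_\alpha}{\mu} \leq \fvnorm{\log\bra{\nicefrac{\prob{\mu}}{\prob{g_\alpha}}}} \df{\mu}{g_\alpha} \leq \comcoet{\Fc}{\Gc}{\mu}\cdot \df{\mu}{g_\alpha},
\end{align*}
where the final inequality is the definition of $\Xi$ in the statement. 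Third, \cref{thm:error_neural} gives $\df{\mu}{g_\alpha} \leq \tfrac{1}{2}\ferr{m,\delta}$ with probability at least $1-\delta$. Chaining the three bounds and taking a square root produces
\begin{align*}
    \tv{g_\alpha}{\mu} \leq \frac{\sqrt{\comcoet{\Fc}{\Gc}{\mu}\cdot \ferr{m,\delta}}}{2\sqrt{2}} = \tverr{m,\delta},
\end{align*}
which combined with \cref{thm:roc} yields the claimed ROC upper bound.

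For the tightness claim I would invoke the tightness clause of \cref{thm:roc} itself: for every radius $r$ there exist distributions $\mu', g'$ with $\tv{\mu'}{g'} = r$ whose optimal ROC curve coincides with the staircase $f_r$ at every point. Instantiated at $r = \tverr{m,\delta}$, this produces a witness pair whose ROC boundary is exactly the $f$ in the statement, so the shape of the envelope is unimprovable in general.

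The main obstacle I anticipate is a subtlety in Step~2: the quantity $\comcoet{\Fc}{\Gc}{\mu}$ is finite only if $\log\bra{\nicefrac{\prob{\mu}}{\prob{\nu}}} \in \Span{\Fc}$ for every $\nu\in\Gc$, which is a mild extension of assumption~(A1) (where both arguments of the log ratio lie in $\Gc$) to the case where the numerator is the real distribution $\mu$. This containment is implicit in the use of $\Xi$ but ought to be flagged as a working hypothesis; in the invertible-neural-network setting of \citet{bai2018approximability} the extension is natural, for instance whenever $\mu$ is itself realized by an element of $\Gc$. A secondary technical point is that $\tverr{m,\delta}$ is a priori unconstrained in $[0,\infty)$ while TV distance lives in $[0,1]$, so in the regime where $\tverr{m,\delta} \geq 1$ the bound is vacuous and the statement should be read with the implicit clipping $\min\{\tverr{m,\delta},1\}$; this is the regime in which $f(x) \equiv 1$ and the theorem trivially holds.
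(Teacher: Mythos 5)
Your proposal is correct and follows essentially the same route as the paper's proof: Pinsker's inequality in both directions, Proposition~2.9 of \citet{zhang2017discrimination} applied to the pair $(\mu, g_\alpha)$ to relate the symmetrized KL divergence to $\comcoet{\Fc}{\Gc}{\mu}\,\df{\mu}{g_\alpha}$, the generalization bound of \cref{thm:error_neural}, and then \cref{thm:roc} for both the ROC envelope and its tightness. Your explicit flagging of the extended assumption $\log\bra{\nicefrac{\prob{\mu}}{\prob{\nu}}}\in\Span{\Fc}$ for all $\nu\in\Gc$ matches a hypothesis the paper also states inside its proof, and your remark about clipping $\tverr{m,\delta}$ at $1$ is a harmless refinement.
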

One complication is that existing generalization bounds do not directly bound TV distance, so these must be extended. The proof can be found in \cref{proof:roc-gan}, and directly gives the following corollary bounding the AUC for GANs.

\begin{corollary}[Bound on AUC for GANs] For any attack policy on GANs, we have that with probability at least $1-\delta$ w.r.t. the randomness of training samples,
	\begin{align*}
	\ruc{} \leq -\frac{1}{2} \tverr{m,\delta}^2 + \tverr{m,\delta} + \frac{1}{2},
	\end{align*}
	where $\tverr{m,\delta}$ is defined as in \cref{eq:epsilon-tv}.
\end{corollary}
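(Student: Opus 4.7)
The plan is to obtain the bound as an almost immediate consequence of two already-stated results: the general TV-vs-AUC inequality in Corollary~\ref{thm:auc-tv}, and the total-variation bound that underlies Theorem~\ref{thm:roc-gan}. Schematically, I would (i) apply Corollary~\ref{thm:auc-tv} to reduce the claim to controlling $\tv{g_\alpha}{\mu}$, (ii) reproduce the ingredient of Theorem~\ref{thm:roc-gan} that bounds $\tv{g_\alpha}{\mu}$ by $\tverr{m,\delta}$ with probability at least $1-\delta$, and (iii) finish by a monotonicity argument on the parabolic bound.

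Concretely, Corollary~\ref{thm:auc-tv} gives, for any generative model,
\[
\ruc \;\le\; -\tfrac{1}{2}\tv{g_\alpha}{\mu}^2 + \tv{g_\alpha}{\mu} + \tfrac{1}{2},
\]
so everything reduces to an upper bound on $\tv{g_\alpha}{\mu}$. For this I would chain three already-available pieces. First, Lemma~\ref{thm:error_neural} guarantees $\df{\mu}{g_\alpha} \le \tfrac{1}{2}\ferr{m,\delta}$ with probability at least $1-\delta$ over the training sample. Second, the $\mu$-anchored analogue of Lemma~\ref{thm:kl_base} --- which is the natural version here since one of the two distributions is $\mu$ rather than a pair of generators, and which replaces $\comcoe{\Fc}{\Gc}$ by $\comcoet{\Fc}{\Gc}{\mu}$ --- upgrades the integral probability metric bound to
\[
\kl{\mu}{g_\alpha} + \kl{g_\alpha}{\mu} \;\le\; \comcoet{\Fc}{\Gc}{\mu}\cdot \df{\mu}{g_\alpha}.
\]
Third, Pinsker's inequality $2\,\tv{\cdot}{\cdot}^2 \le \mathrm{KL}$ applied to whichever of the two KL terms is smaller yields $\tv{g_\alpha}{\mu}^2 \le \tfrac{1}{4}\comcoet{\Fc}{\Gc}{\mu}\cdot \df{\mu}{g_\alpha} \le \tfrac{1}{8}\comcoet{\Fc}{\Gc}{\mu}\cdot \ferr{m,\delta}$, which matches the definition of $\tverr{m,\delta}^2$ in \cref{eq:epsilon-tv}.

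Finally, I would close the loop by monotonicity. The function $h(x) = -\tfrac{1}{2}x^2 + x + \tfrac{1}{2}$ satisfies $h'(x) = 1 - x \ge 0$ on $[0,1]$, so $h$ is nondecreasing there; since $\tv{g_\alpha}{\mu}\in[0,1]$ is dominated by $\tverr{m,\delta}$ with the stated probability (and in the regime $\tverr{m,\delta}\ge 1$ the claim is vacuous, as $\ruc\le 1$ always), plugging the TV bound into $h$ transfers it into the claimed AUC bound. The main obstacle is really just bookkeeping --- keeping the factor $1/(2\sqrt{2})$ that appears in $\tverr{m,\delta}$ consistent through the Pinsker step --- since conceptually the corollary is a clean composition of pieces already developed in the paper.
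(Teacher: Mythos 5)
Your proposal is correct and follows essentially the same route as the paper: the paper obtains the corollary by integrating the ROC bound of \cref{thm:roc-gan}, whose proof establishes exactly the chain you describe ($\df{\mu}{g_\alpha}\le\tfrac{1}{2}\ferr{m,\delta}$ via \cref{thm:error_neural}, the $\mu$-anchored KL lemma, and Pinsker, giving $\tv{g_\alpha}{\mu}\le\tverr{m,\delta}$ with probability $1-\delta$), while you merely swap the order by integrating first (\cref{thm:auc-tv}) and substituting afterwards, which is why you correctly add the observation that $x\mapsto -\tfrac{1}{2}x^2+x+\tfrac{1}{2}$ is nondecreasing on $[0,1]$. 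Your constants in the Pinsker step match the definition of $\tverr{m,\delta}$ in \cref{eq:epsilon-tv}, so the argument goes through.
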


Note that the AUC bound decays as $O(m^{-1/4})$. 

\paragraph{Discussions} These results confirm the prior empirical observation that GANs are more robust to membership inference attacks when the number of training samples grows \cite{lin2020using,chen2019gan}.
However, the results heavily rely on the assumption of generator parameter distribution \cref{eq:mi_generator_dist}, which was introduced in  \cite{sablayrolles2019white}. It is unlikely to strictly hold in practice. Extending the results to more general settings would be an interesting future direction.

\subsection{Proof of \cref{thm:roc}}
\label{proof:roc}
	It is known from the hypothesis testing literature \cite{kairouz2015composition} that, for any attack policy, the difference between the true positive rate (TP) and false positive rate (FP) is upper bounded by the total variation (TV) distance $\tv{g_\alpha}{\mu}$:
	\begin{align}
	\tp{} \leq \fp{} + \min\brc{\tv{g_\alpha}{\mu}, 1-\fp{}}\;.
	\label{eq:tp-bound}
	\end{align}

	Note that total variation distance and ROC curve has a very simple geometric relationship, as noted in \citet{lin2018pacgan} (Remark 7).
		That is, the total variation distance between $g_\alpha$ and $\mu$ is the intersection between the vertical axis and the tangent line to the upper boundary of the ROC curve that has slope 1 (e.g., see \cref{fig:roc}).
	This immediately implies that $f(x)$ is an upper bound for all possible ROC curves.
	\begin{figure}[t]
		\centering
		\includegraphics[width=0.3\linewidth]{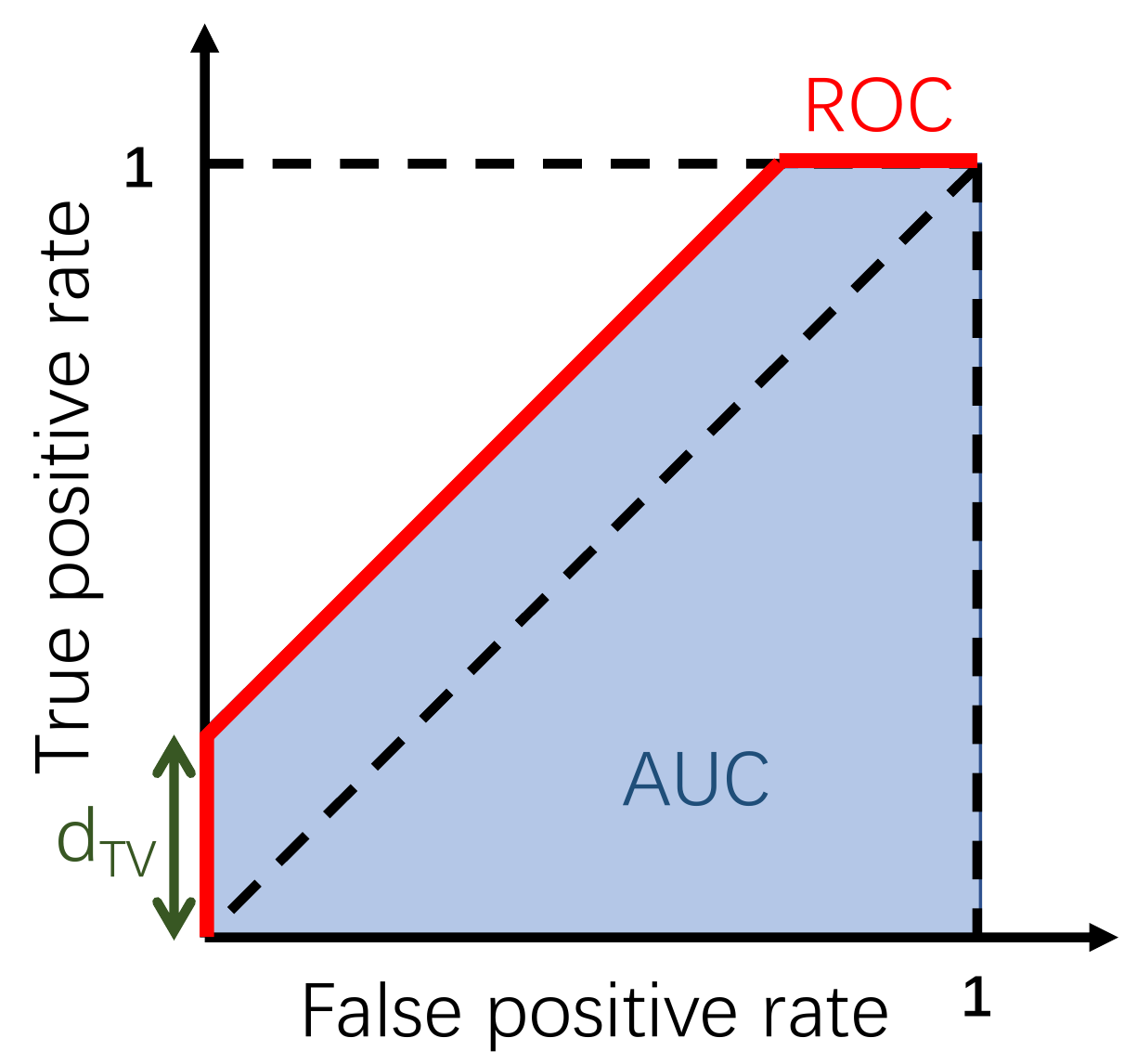}
		\caption{The upper bound of ROC curves.}
		\label{fig:roc}
	\end{figure}
	
	To show tightness, we can construct a $g'$ and $\mu'$ as shown in \cref{fig:dist}, such that $\tv{\mu'}{g'}=r$ and they achieve the ROC curve in \cref{fig:roc}.
	
	\begin{figure}[t]
		\centering
		\includegraphics[width=0.32\linewidth]{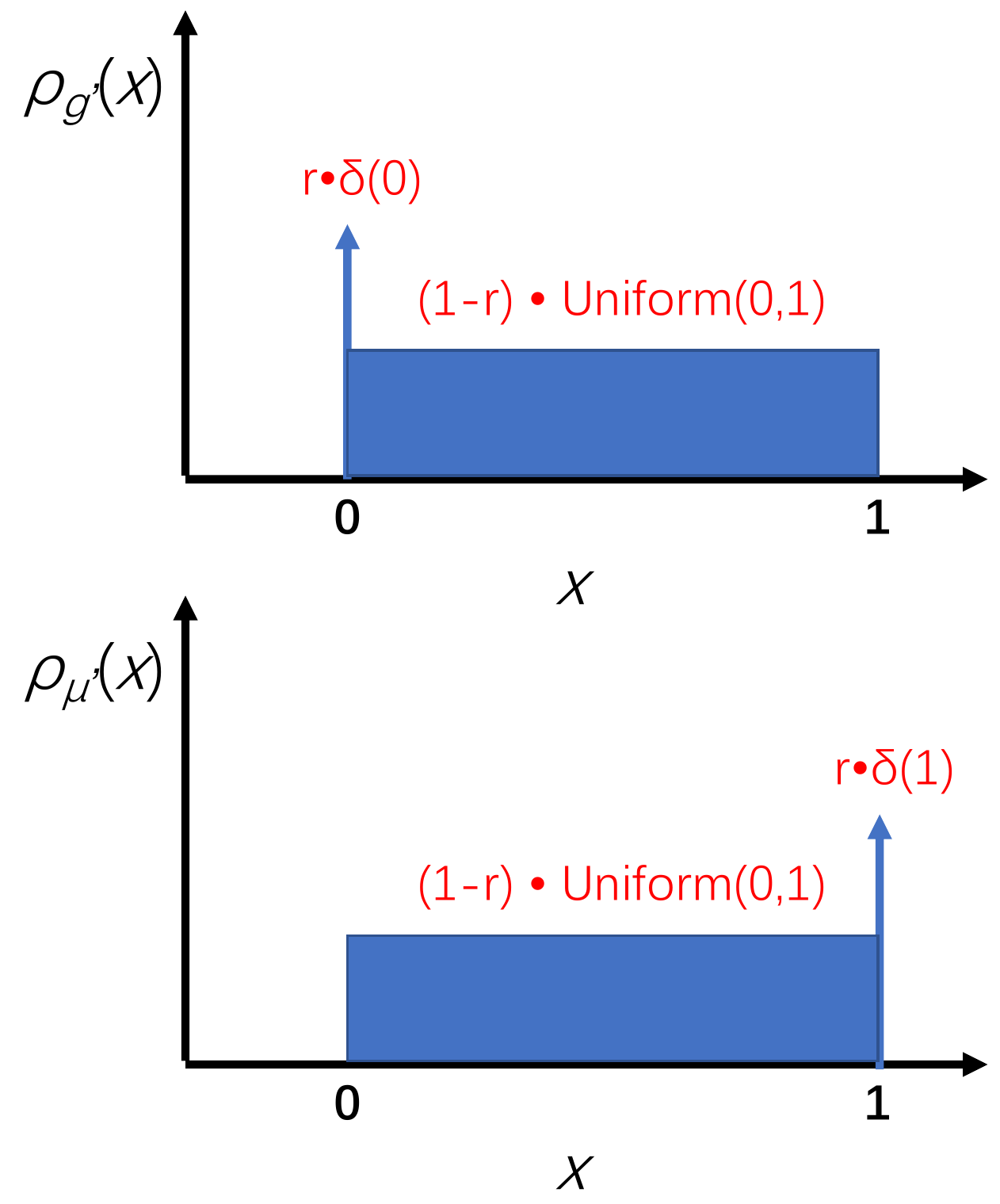}
		\caption{The pair of distributions that achieve the ROC upper bound.}
		\label{fig:dist}
	\end{figure}

\subsection{Proof of \cref{thm:roc-gan}}
\label{proof:roc-gan}
	We begin by showing that, under the assumptions in \cref{thm:error_neural} and assuming that $\forall \nu \in \Gc, \log\bra{\nicefrac{\prob{\mu}}{\prob{\nu}}} \in \Span{\Fc}$, we have that with probability at least $1-\delta$ w.r.t. the randomness of training samples,
	\begin{align}
	\tv{g_\alpha}{\mu} \leq \tverr{m,\delta} \triangleq \frac{\sqrt{ \comcoet{\Fc}{\Gc}{\mu} \cdot \ferr{m,\delta}}}{2\sqrt{2}}\;.
	\label{eq:dtv}
	\end{align}
	To show this, note that \cref{thm:error_neural} gives an upper bound on the integral probability metric between the real and generated distribution. We first connect this distance to the KL divergence with the following lemma.
	\begin{lemma}
		Denote the real distribution as $\mu$. Given a generator set $\Gc$ and a discriminator set $\Fc$ which satisfy $\forall \nu \in \Gc, \log\bra{\nicefrac{\prob{\mu}}{\prob{\nu}}} \in \Span{\Fc}$, then we have $\forall g_\alpha \in \Gc$
		\begin{align*}
		\kl{g_\alpha}{\mu} + \kl{\mu}{g_\alpha} \leq \comcoet{\Fc}{\Gc}{\mu} \df{\mu}{g_\alpha} 
		\end{align*}
		where
		$
		\comcoet{\Fc}{\Gc}{\mu} \triangleq \sup_{\nu \in \Gc}\fvnorm{\log\bra{\nicefrac{\prob{\mu}}{\prob{\nu}}}}
		$.
	\end{lemma}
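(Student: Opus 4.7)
The plan is to chain three ingredients to arrive at Theorem~\ref{thm:roc-gan}. First, prove the asymmetric KL bound stated in the final lemma above, which replaces one of the ``$\nu_1,\nu_2 \in \Gc$'' slots in Lemma~\ref{thm:kl_base} by the fixed true distribution $\mu$. Second, convert this symmetrized-KL bound to a total variation bound via Pinsker's inequality together with the generalization bound in Lemma~\ref{thm:error_neural}, establishing \eqref{eq:dtv}. Third, invoke the general ROC outer bound of Proposition~\ref{thm:roc} with $r = \tverr{m,\delta}$ to obtain the claimed curve $f$, including tightness.

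For the lemma, I would mimic the argument underlying Proposition~2.9 of \citet{zhang2017discrimination}. Writing the symmetric KL as a single integral,
\begin{align*}
\kl{g_\alpha}{\mu} + \kl{\mu}{g_\alpha} = \Eb_{x\sim \mu}\!\brb{\log\tfrac{\prob{\mu}(x)}{\prob{g_\alpha}(x)}} - \Eb_{x\sim g_\alpha}\!\brb{\log\tfrac{\prob{\mu}(x)}{\prob{g_\alpha}(x)}}.
\end{align*}
By hypothesis $\log(\prob{\mu}/\prob{g_\alpha}) \in \Span{\Fc}$, so for any representation $\log(\prob{\mu}/\prob{g_\alpha}) = w_0 + \sum_i w_i f_i$, the constant $w_0$ cancels between the two expectations and the difference becomes $\sum_i w_i\bra{\Eb_\mu[f_i] - \Eb_{g_\alpha}[f_i]}$. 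Using evenness of $\Fc$ from (A2), each $\abs{\Eb_\mu[f_i] - \Eb_{g_\alpha}[f_i]} \le \df{\mu}{g_\alpha}$, so the sum is bounded by $\bra{\sum_i \abs{w_i}}\df{\mu}{g_\alpha}$. Taking the infimum over representations gives $\fvnorm{\log(\prob{\mu}/\prob{g_\alpha})}\df{\mu}{g_\alpha}$, and the supremum over $g_\alpha \in \Gc$ bounds the first factor by $\comcoet{\Fc}{\Gc}{\mu}$, proving the lemma.

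Next, applying Pinsker's inequality symmetrically yields $2\tv{g_\alpha}{\mu}^2 \leq \kl{\mu}{g_\alpha}$ and also $\leq \kl{g_\alpha}{\mu}$, so $4\tv{g_\alpha}{\mu}^2 \leq \kl{g_\alpha}{\mu} + \kl{\mu}{g_\alpha} \leq \comcoet{\Fc}{\Gc}{\mu}\df{\mu}{g_\alpha}$. Combining with Lemma~\ref{thm:error_neural}, which gives $\df{\mu}{g_\alpha} \leq \tfrac{1}{2}\ferr{m,\delta}$ with probability at least $1-\delta$ over the training samples, yields $\tv{g_\alpha}{\mu} \leq \tfrac{1}{2\sqrt{2}}\sqrt{\comcoet{\Fc}{\Gc}{\mu}\cdot \ferr{m,\delta}} = \tverr{m,\delta}$. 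Plugging this value of $r$ into Proposition~\ref{thm:roc} gives both the upper envelope $f$ and its tightness, since Proposition~\ref{thm:roc} already exhibits distributions at TV distance $r$ realizing $f$ pointwise.

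The main obstacle is the symmetrization step: Pinsker only bounds TV by one direction of KL, and only $\kl{\mu}{g_\alpha}$ naturally appears from the training loss, so I need to exploit both directions — which is precisely why the lemma is written for the symmetric sum $\kl{g_\alpha}{\mu}+\kl{\mu}{g_\alpha}$ rather than a single KL. A secondary subtlety is matching the slackness constant: one must be careful that the factor $\tfrac{1}{2\sqrt{2}}$ in \eqref{eq:epsilon-tv} arises from $\sqrt{1/4}\cdot\sqrt{1/2}$, one half coming from Pinsker and one half from the $\tfrac{1}{2}\ferr{m,\delta}$ in Lemma~\ref{thm:error_neural}, and that the probability $1-\delta$ of the generalization event transfers unchanged to the TV bound (and hence to the ROC/AUC conclusions), since the remaining steps are deterministic consequences of the event $\df{\mu}{g_\alpha}\le \tfrac{1}{2}\ferr{m,\delta}$.
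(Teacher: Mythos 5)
Your proof is correct and takes essentially the same route as the paper: the paper's one-line proof just invokes Proposition~2.9 of \citet{zhang2017discrimination} with one argument fixed to $\mu$, and your expansion (writing the symmetric KL as $\Eb_\mu[h]-\Eb_{g_\alpha}[h]$ for $h=\log(\prob{\mu}/\prob{g_\alpha})$, cancelling $w_0$, using evenness of $\Fc$ to bound each $\abs{\Eb_\mu[f_i]-\Eb_{g_\alpha}[f_i]}$ by $\df{\mu}{g_\alpha}$, taking the infimum over representations and then the supremum over $\Gc$) is exactly the argument behind that proposition. The downstream steps --- Pinsker applied to both KL directions to obtain the factor $4$, combination with Lemma~\ref{thm:error_neural}, and substitution into Proposition~\ref{thm:roc} --- also match the paper, including the accounting for the constant $\tfrac{1}{2\sqrt{2}}$.
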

	Similar to \cref{thm:kl_base}, this lemma relies on  Proposition 2.9 in \citet{zhang2017discrimination}. Furthermore, we use Pinsker’s inequality \cite{tsybakov2008introduction} to upper bound the TV distance by the KL distance. Pinsker’s inequality says that $\tv{a}{b} \leq \sqrt{\frac{1}{2}\kl{a}{b}}$ for any two distributions $a,b$. Therefore, we have
	\begin{align*}
	\comcoet{\Fc}{\Gc}{\mu} \df{\mu}{g_\alpha}  \geq 4\cdot \tv{g_\alpha}{\mu}^2
	\end{align*}
	Combing this equation with \cref{thm:error_neural} we get the desired inequality.

	We can use \cref{eq:dtv} to upper bound \cref{eq:tp-bound} as 
	\begin{align}
	\tp{} \leq \fp{} + \min\brc{\tverr{m,\delta}, 1-\fp{}}\;.
	\label{eq:tp-bound-gan}
	\end{align}

Combining \cref{eq:tp-bound-gan} with \cref{thm:roc} gives the result.

\section{Discussion}
\label{sec:conclusion}
In this work, we show that GAN-generated samples naturally exhibit a (weak) differential privacy guarantee as well as protection against membership inference attacks. 
We provide bounds on the privacy risk of each of these attacks. 
However, as discussed in \cref{sec:dp}, the inherent differential privacy guarantee in GANs is weak. 
This suggests that differentially-private training techniques are required to ensure meaningful differential privacy guarantees in practice. 
Given that current techniques like DP-SGD \cite{abadi2016deep} and PATE \cite{jordon2018pate} sacrifice fidelity in exchange for privacy (as discussed in \cref{sec:intro}), 
there is a need to develop new techniques that achieve a better privacy-fidelity tradeoff. 
As noted in \cref{sec:intro}, one approach for achieving this goal may be to ensure the differential privacy for releasing generated samples (rather than enforcing the differential privacy for releasing parameters).

\paragraph{Limitations and future work}\mbox{}

(1) Since our results build on existing generalization bounds for GANs \cite{zhang2017discrimination}, we inherit their assumptions (e.g., on the classes of generators and discriminators). Some of these assumptions do not apply to all GAN architectures. Extending the results to more general settings would be an interesting direction, potentially with stronger generalization bounds. 

(2) As discussed in \cref{sec:membership}, the results on membership inference attacks rely on an assumption from \cite{sablayrolles2019white} regarding the generator parameter distribution; this assumption is unlikely to hold in practice. Relaxing this assumption is an interesting direction for future work.

(3) The bounds we give for both privacy notions  depend on unknown constants like optimization and approximation errors. Numerically quantifying these bounds in practice remains a challenging and interesting direction.

(4) This paper focuses on GANs. Similar techniques could be used to analyze other types of generative models. This could be an interesting direction for future work.

\section*{Acknowledgments}
This work was supported in part by National Science Foundation grants CIF-1705007 and CA-2040675. 
This material is based upon work supported by the Air Force Office of Scientific Research under award
number FA9550-21-1-0090.
This research was sponsored by the Combat Capabilities Development Command Army Research Laboratory and was accomplished under Cooperative Agreement Number W911NF-13-2-0045 (ARL Cyber Security CRA). The views and conclusions contained in this document are those of the authors and should not be interpreted as representing the official policies, either expressed or implied, of the Combat Capabilities Development Command Army Research Laboratory or the U.S. Government. The U.S. Government is authorized to reproduce and distribute reprints for Government purposes not withstanding any copyright notation here on.
The authors would also like to acknowledge the generous support of JP Morgan Chase, Siemens AG, Google, and the Sloan Foundation. 

\bibliographystyle{plainnat}
\bibliography{main}

\clearpage
\onecolumn

\section*{Appendix}
\appendix

\section{Proof of \cref{thm:error_neural_between_g_neighboring}}
\label{app:thm:error_neural_between_g_neighboring}

Assume that the samples of $D_i$ datasets are $x_1^i,...,x_m^i$. Without loss of generality, we assume that $x_i^0=x_i^1$ for $1\leq i \leq m-1$. Then we have
\begin{align*}
\df{\hat{\mu}_m^0}{\hat{\mu}_m^1}
&= \sup_{f\in \Fc}\brc{\Eb_{x\sim \hat{\mu}_m^0}\brb{f(x)} - \Eb_{x\sim \hat{\mu}_m^1}\brb{f(x)}}\\
&=\sup_{f\in \Fc}\brc{ \frac{1}{m} \sum_{i=1}^m f(x_i^0) - \frac{1}{m} \sum_{i=1}^m f(x_i^1)}\\
&= \frac{1}{m} \sup_{f\in \Fc} \brc{ f(x_m^0) - f(x_m^1) }\\
&\leq \frac{2\Delta}{m}
\end{align*}

\section{Proof of \cref{thm:error_neural_between_g}}
\label{app:thm:error_neural_between_g}
From \cref{thm:error_neural}, we know that
\begin{align*}
\Pb\brb{\df{\hat{\mu}_m^0}{g_0} > \frac{1}{2}\ferr{k,\xi}} \leq \xi\\
\Pb\brb{\df{\hat{\mu}_m^1}{g_1} > \frac{1}{2}\ferr{k,\xi}} \leq \xi
\end{align*}
Therefore, 
\begin{align*}
\Pb\brb{\df{\hat{\mu}_m^0}{g_0} \leq  \frac{1}{2}\ferr{k,\xi} \wedge \df{\hat{\mu}_m^1}{g_1} \leq  \frac{1}{2}\ferr{k,\xi}} \geq 1-2\xi.
\end{align*}
With probability at least $1-2\xi$, we have
\begin{align*}
\df{g_0}{g_1}
=& \sup_{f\in \Fc}\brc{\Eb_{x\sim g_0}\brb{f(x)} - \Eb_{x\sim g_1}\brb{f(x)}} \\
=& \sup_{f\in \Fc}\brc{\Eb_{x\sim g_0}\brb{f(x)} - \Eb_{x\sim \hat{\mu}_m^0}\brb{f(x)} 
	+ \Eb_{x\sim \hat{\mu}_m^0}\brb{f(x)} - \Eb_{x\sim \hat{\mu}_m^1}\brb{f(x)}
	+ \Eb_{x\sim \hat{\mu}_m^1}\brb{f(x)} - \Eb_{x\sim g_1}\brb{f(x)}}\\
\leq& \sup_{f\in \Fc}\brc{\Eb_{x\sim g_0}\brb{f(x)} - \Eb_{x\sim \hat{\mu}_m^0}\brb{f(x)}} 
+ \sup_{f\in \Fc}\brc{\Eb_{x\sim \hat{\mu}_m^0}\brb{f(x)} - \Eb_{x\sim \hat{\mu}_m^1}\brb{f(x)}} \\
&\quad+\sup_{f\in\Fc}\brc{\Eb_{x\sim \hat{\mu}_m^1}\brb{f(x)} - \Eb_{x\sim g_1}\brb{f(x)}}\\
=& \sup_{f\in \Fc}\brc{\Eb_{x\sim \hat{\mu}_m^0}\brb{f(x)} - \Eb_{x\sim g_0}\brb{f(x)}} 
+ \sup_{f\in \Fc}\brc{\Eb_{x\sim \hat{\mu}_m^0}\brb{f(x)} - \Eb_{x\sim \hat{\mu}_m^1}\brb{f(x)}} \\
&\quad+\sup_{f\in\Fc}\brc{\Eb_{x\sim \hat{\mu}_m^1}\brb{f(x)} - \Eb_{x\sim g_1}\brb{f(x)}}\\
&~~\text{($\Fc$ is even)}\\
=& \df{\hat{\mu}_m^0}{g_0} + \df{\hat{\mu}_m^0}{\hat{\mu}_m^1} + \df{\hat{\mu}_m^1}{g_1}\\
\leq& \ferr{k,\xi} + \frac{2\Delta}{m}
\end{align*}

\section{Proof of \cref{thm:kl-probdp}}
\label{app:thm:kl-probdp}
Define $\prob{p}$, $\prob{q}$ as the probability (density) functions of $p$ and $q$ respectively. Assume set $S_0=\{x: \log \prob{p}(x)-\log \prob{q}(x)\geq \epsilon\}$, then $\forall x\in S_0$, we have $\prob{p}(x)\geq \prob{q}(x)e^\epsilon$, and
\begin{align*}
s \geq & \kl{p}{q} + \kl{q}{p}\\
=& \int_x \bra{\prob{p}(x)-\prob{q}(x)}\bra{\log \prob{p}(x) - \log \prob{q}(x)}\\
\geq& \int_S \bra{\prob{p}(x)-\prob{q}(x)}\bra{\log \prob{p}(x) - \log \prob{q}(x)}\\
&\text{(because $\bra{\prob{p}(x)-\prob{q}(x)}\bra{\log \prob{p}(x) - \log \prob{q}(x)}\geq 0~~\forall x$)}\\
\geq& \int_S \prob{p}(x) (1-e^{-\epsilon})\epsilon
\end{align*}
i.e. $\Pb\brb{M(D_0)\in S_0}\leq \frac{s}{\epsilon (1-e^{-\epsilon})}$. For any set $S$, we have 
\begin{align*}
\Pb\brb{M(D_0)\in S\setminus S_0}
=& \int_{S\setminus S_0} \prob{p}(x)dx \\
\leq& \int_{S\setminus S_0} \prob{q}(x)e^\epsilon dx\\
=& e^\epsilon\Pb\brb{M(D_1)\in S\setminus S_0}
\end{align*}

\section{Proof of \cref{thm:error_neural_between_g_upper}}
\label{app:thm:error_neural_between_g_upper}
Recall that in \cref{app:thm:error_neural_between_g_upper} we get
\begin{align*}
\Pb\brb{\df{\hat{\mu}_m^0}{g_0} \leq  \frac{1}{2}\ferr{k,\xi} \wedge \df{\hat{\mu}_m^1}{g_1} \leq  \frac{1}{2}\ferr{k,\xi}} \geq 1-2\xi.
\end{align*}
With probability at least $1-2\xi$, we have
\begin{align*}
\df{g_0}{g_1}
=& \sup_{f\in \Fc}\brc{\Eb_{x\sim g_0}\brb{f(x)} - \Eb_{x\sim g_1}\brb{f(x)}} \\
=& \sup_{f\in \Fc}\brc{\Eb_{x\sim g_0}\brb{f(x)} - \Eb_{x\sim \hat{\mu}_m^0}\brb{f(x)} 
	+ \Eb_{x\sim \hat{\mu}_m^0}\brb{f(x)} - \Eb_{x\sim \hat{\mu}_m^1}\brb{f(x)}
	+ \Eb_{x\sim \hat{\mu}_m^1}\brb{f(x)} - \Eb_{x\sim g_1}\brb{f(x)}}\\
\geq& -\sup_{f\in \Fc}\brc{\Eb_{x\sim g_0}\brb{f(x)} - \Eb_{x\sim \hat{\mu}_m^0}\brb{f(x)}} 
+ \sup_{f\in \Fc}\brc{\Eb_{x\sim \hat{\mu}_m^0}\brb{f(x)} - \Eb_{x\sim \hat{\mu}_m^1}\brb{f(x)}} \\
&\quad-\sup_{f\in\Fc}\brc{\Eb_{x\sim \hat{\mu}_m^1}\brb{f(x)} - \Eb_{x\sim g_1}\brb{f(x)}}\\
=& -\df{\hat{\mu}_m^0}{g_0} + \df{\hat{\mu}_m^0}{\hat{\mu}_m^1} - \df{\hat{\mu}_m^1}{g_1}\\
\geq&  \frac{\Delta'}{2m} - \ferr{k,\xi}
\end{align*}

\section{Proof of \cref{thm:dp_tv}}
Assume that $S=\brc{x\in X|p(x)>q(x)}$ and $T=X\setminus S=\brc{x\in X|p(x)<=q(x)}$. Let $a_1=\int_{x\in S} p(x)dx$, $b_1=\int_{x\in S}q(x)dx$, $a_2=\int_{x\in T}p(x)dx$, and $b_2=\int_{x\in T}q(x)dx$. Because of the the differential privacy guarantee, we have
\begin{align*}
	a_1-\delta \leq e^\epsilon b_1\\
	b_2 - \delta \leq e^\epsilon a_2
\end{align*}
Note that $a_1+a_2=1$, $b_1+b_2=1$.
Therefore,  we have
\begin{align*}
	b_1+a_2 \geq \frac{2-2\delta}{1+e^\epsilon}
\end{align*}
and
\begin{align*}
	\tv{p}{q} = \frac{a_1+b_2-b_1-a_2}{2} \leq \frac{e^\epsilon+2\delta-1}{e^\epsilon+1}.
\end{align*}

\end{document}